\newtheorem{theorem}{Theorem}
\newenvironment{proof}{\textit{Proof.}}{\hfill$\square$}
\DeclareMathOperator*{\argmin}{argmin}
\newcommand{\comment}[1]{}
\begin{document}
%
\title{Lifted Proximal Operator Machines}

\author{Jia Li\hspace{2em}Cong Fang\hspace{2em}Zhouchen Lin\Letter\\
Key Lab. of Machine Perception, School of EECS, Peking University\\
\texttt{jiali.gm@gmail.com fangcong@pku.edu.cn zlin@pku.edu.cn}\\
} \maketitle
\begin{abstract}
\begin{quote}
We propose a new optimization method for training feed-forward
neural networks. By rewriting the activation function as an
equivalent proximal operator, we approximate a feed-forward neural
network by adding the proximal operators to the objective function
as penalties, hence we call the lifted proximal operator machine
(LPOM). LPOM is \emph{block multi-convex} in all layer-wise
weights and activations. This allows us to use block coordinate
descent to update the layer-wise weights and activations \emph{in
parallel}. Most notably, we only use the mapping of the activation
function \emph{itself}, rather than its derivatives, thus avoiding
the gradient vanishing or blow-up issues in gradient based
training methods. So our method is applicable to various
non-decreasing Lipschitz continuous activation functions, which
\emph{can be saturating and non-differentiable}. LPOM does not
require more auxiliary variables than the layer-wise activations,
thus using roughly the same amount of memory as stochastic
gradient descent (SGD) does. We further prove the convergence of
updating the layer-wise weights and activations. Experiments on
MNIST and CIFAR-10 datasets testify to the advantages of LPOM.
\end{quote}
\end{abstract}


\section{Introduction}
Feed-forward deep neural networks (DNNs) are cascades of fully
connected layers and there are no feedback connections. In recent
years, with the advances in hardware and dataset sizes,
feed-forward DNNs have become standard in many tasks, such as
image recognition~\citep{krizhevsky2012imagenet}, speech
recognition~\citep{hinton2012deep}, natural language
understanding~\citep{collobert2011natural}, and as a building
block of the Go game learning system~\citep{silver2016mastering}.

For several decades, training a DNN is accomplished by optimizing
a highly nonconvex and nested function of the network weights. The
predominant method for training DNNs is stochastic gradient
descent (SGD)~\citep{rumelhart1986learning}, whose effectiveness
has been demonstrated by the successes of DNNs in various
real-world applications. Recently, many variants of SGD have been
proposed, which use adaptive learning rates and momentum terms,
e.g., Nesterov momentum~\citep{sutskever2013importance},
AdaGrad~\citep{duchi2011adaptive},
\mbox{RMSProp}~\citep{dauphin2015equilibrated}, and
Adam~\citep{kingma2014adam}. SGD and its variants use a few
training samples to estimate the full gradient, making the
computational complexity of each iteration small. Moreover, the
estimated gradients have noise, which is helpful for escaping
saddle points~\citep{ge2015escaping}. However, they have some
drawbacks as well. One major problem is the vanishing or blow-up
gradient issue, where the magnitudes of gradients decrease or
increase exponentially with the number of layers. This causes slow
or unstable convergence, especially in very deep networks. This
flaw can be remitted by using non-saturating activation functions,
such as rectified linear unit (ReLU), and modified network
architectures, such as ResNet~\citep{he2016deep}. However, the
fundamental problem remains. Furthermore, they cannot deal with
non-differentiable activation functions directly (e.g., binarized
neural networks~\citep{hubara2016binarized}) and do not allow
parallel weight updates across the layers. For more discussions on
the limitations of SGD, please refer
to~\citep{taylor2016training}.


The drawbacks of SGD motivate research on alternative training
methods for DNNs. Recently, training a feed-forward neural network
is formulated as a constrained optimization problem, where the
network activations are introduced as auxiliary variables and the
network configuration is guaranteed by layer-wise
constraints~\citep{carreira2014distributed}. It breaks the
dependency among the nested functions into equality constraints,
so many standard optimization methods can be utilized. Some
methods of this type of approach were studied and they differed in
how to handle the equality constraints.
\citet{carreira2014distributed} approximated the equality
constraints via quadratic penalties and alternately optimized
network weights and activations. \citet{zeng2018global} introduced
one more block of auxiliary variables per layer and also
approximated the equality constraints via quadratic penalties.
Inspired by alternating direction method of multiplier
(ADMM)~\citep{lin2011linearized}, \citet{taylor2016training} and
\citet{zhang2016efficient} used the augmented Lagrangian approach
to obtain exact enforcement of the equality constraints. However,
the two methods involved the Lagrange multipliers and nonlinear
constraints, thus were more memory demanding and more difficult in
optimization. Motivated by the fact that the ReLU activation
function is equivalent to a simple constrained convex minimization
problem, \citet{zhang2017convergent} relaxed the nonlinear
constraints as penalties, which encode the network architecture
and the ReLU activation function. Thus, the nonlinear constraints
no longer exist. However, their approach is limited to the ReLU
function and does not apply to other activation functions.
\citet{askari2018lifted} followed this idea by considering more
complex convex optimization problems and discussed several types
of non-decreasing activation functions. However, their methods to
update the weights and activations are still limited to the ReLU
function. Their approach cannot outperform SGD and can only serve
for producing good initialization for SGD. Actually, we have found
that their
formulation was incorrect (see Subsection ``Advantages of LPOM'').

This paper makes the following contributions:
\begin{itemize}
  \item[$\bullet$] We propose a new formulation to train feed-forward DNNs, which we call the lifted proximal operator machine (LPOM)\footnote{Patent filed.}. LPOM is block multi-convex, i.e., the problem is convex w.r.t. weights or activations of each layer when the remaining weights and activations are
  fixed. In contrast, almost all existing DNN training methods do
  not have such a property. This greatly facilitates the training
  of DNNs.
  \item[$\bullet$] Accordingly, we apply block coordinate descent (BCD) to solve LPOM, where the layer-wise weights and activations can be updated in parallel. Most notably, the update of the layer-wise weights or activations only utilizes the activation function itself, rather than its derivatives, thus avoiding the gradient vanishing or blow-up issues in gradient based training methods. Moreover, LPOM does not need more auxiliary variables than the layer-wise activations, thus its memory cost is close to that of SGD. We further prove that the iterations to update layer-wise weights or activations are convergent.
  \item[$\bullet$] Since only the activation function itself is involved in computation, LPOM is able to handle general non-decreasing\footnote{Although our theories can be easily extended to non-monotone activation functions, such a case is of less interest in reality.} Lipschitz continuous activation functions, which can be saturating (such as sigmoid and tanh) and non-differentiable (such as ReLU and leaky ReLU). So LPOM successfully overcomes the computation difficulties when using most of existing activation functions.
\end{itemize}
We implement LPOM on fully connected DNNs and test it on benchmark
datasets, MNIST and CIFAR-10, and obtain satisfactory results. For
convolutional neural networks (CNNs), since we have not
reformulated pooling and skip-connections, we leave the
implementation of LPOM on CNNs to future work. Note that the
existing non-gradient based approaches also focus on fully
connected DNNs
first~\citep{carreira2014distributed,zeng2018global,taylor2016training,zhang2016efficient,zhang2017convergent,askari2018lifted}.



\section{Related Work}
In a standard feed-forward neural network, the optimization
problem to train an $n$-layer neural network for classification
tasks is:
\begin{equation}\label{nested_problem}
\begin{split}
\hspace{-0.2cm} \min_{\{W^i\}}{\ell}\left(\phi
(W^{n-1}\phi(\cdots\phi (W^{2}\phi(W^{1}X^{1}))\cdots)), L\right),
\end{split}
\end{equation}
where $X^1 \in \mathbb{R}^{n_1\times m}$ is a batch of training
samples, $L \in \mathbb{R}^{c\times m}$ denotes the corresponding
labels, $n_1$ is the dimension of the training samples, $m$ is the
batch size, $c$ is the number of classes,
$\{W^i\}_{i\!=\!1}^{n\!-\!1}$ are the weights to be learned in
which the biases have been omitted for simplicity, $\phi(\cdot)$
is an element-wise activation function (e.g., sigmoid, tanh, and
ReLU), and ${\ell}(\cdot,\cdot)$ is the loss function (e.g., the
least-square error or the cross-entropy error). Here the neural
network is defined as a nested function, where the first layer
function of the neural network is $\phi(W^1X^1)$, the $i$-th layer
($i\!=\!2,\cdots,n$) function has the form $\phi(W^iX)$, and $X$
is the output of the $(i\!-\!1)$-th layer function. A common
approach to optimize~\eqref{nested_problem} is by SGD, i.e.,
calculating the gradient w.r.t. all weights of the network using
backpropagation and then updating the weights by gradient descent.

By introducing the layer-wise activations as a block of auxiliary
variables, the training of a neural network can be equivalently
formulated as an equality constrained optimization
problem~\citep{carreira2014distributed}:
\begin{equation}\label{problem}
\begin{split}
&\min_{\{W^i\}, \{X^i\}}{\ell}(X^n,L)\\
&~~\text{s.t.}~X^i\!=\!\phi(W^{i-1}X^{i\!-\!1}),~i\!=\!2,3,\cdots,n,
\end{split}
\end{equation}
where $X^i$ is the activation of the $i$-th layer and other
notations are the same as those in~\eqref{nested_problem}.
The constraints in~\eqref{problem} ensure that the auxiliary
variables $\{X^i\}_{i=2}^{n}$ exactly match the forward pass of
the network. Compared with problem~\eqref{nested_problem},
problem~\eqref{problem} is constrained. But since the objective
function is not nested, hence much simpler, such an equivalent
reformulation may lead to more flexible optimization methods. Note
that when using SGD to solve problem~\eqref{nested_problem}, it
actually works on problem~\eqref{problem} implicitly as \emph{the
activations $\{X^i\}_{i=2}^{n}$ need be recorded in order to
compute the gradient.}

Inspired by the quadratic-penalty method, \citet{carreira2014distributed} developed the method of auxiliary coordinates (MAC) to solve
problem~\eqref{problem}. MAC uses
quadratic penalties to approximately enforce equality constraints
and tries to solve the following problem:
\begin{equation}\label{problem_2split}
\begin{split}
\min_{\{W^i\},
\{X^i\}}\!{\ell}(X^n,L)\!+\!\frac{\mu}{2}\!\sum_{i=2}^n\|X^i\!-\!\phi(W^{i-1}X^{i\!-\!1})\|_F^2,
\end{split}
\end{equation}
where $\mu>0$ is a constant that controls the weight of the constraints and $\|\cdot\|_F$ is the Frobenius norm.
\citet{zeng2018global} decoupled the nonlinear activations in~\eqref{problem} with new auxiliary variables:
\begin{equation}\label{problem_3var}
\begin{split}
\hspace{-0.2cm}
&\min_{\{W^i\}, \{X^i\}, \{U^i\}}{\ell}(X^n,L)\\
\hspace{-0.2cm}
&\text{s.t.}~U^{i}\!=\!W^{i-1}X^{i\!-\!1}, X^i\!=\!\phi(U^{i}),~i\!=\!2,3,\cdots,n.
\end{split}
\end{equation}
This is called as the 3-splitting formulation. Accordingly,
problem~\eqref{problem} is the 2-splitting formulation. Following
the MAC method, rather than directly solving
problem~\eqref{problem_3var}, they optimized the following problem
instead:
\begin{equation}\label{problem_3split_Zeng}
\begin{split}
&\min_{\{W^i\}, \{X^i\}, \{U^i\}}{\ell}(X^n,L)\\
&+\!\frac{\mu}{2}\!\sum_{i=2}^n (\|U^{i}\!-\!W^{i-1}X^{i\!-\!1}\|_F^2\!+\!\|X^i\!-\!\phi(U^{i})\|_F^2).
\end{split}
\end{equation}
They adapted a BCD method to solve the above problem.

\begin{table*}
\caption{The $f(x)$ and $g(x)$ of several representative activation functions. Note that $0\!<\!\alpha\!<\!1$ for the leaky ReLU function and $\alpha\!>\!0$ for the exponential linear unit (ELU) function~\citep{clevert2015fast}. 
We only use $\phi(x)$ in our computation and do NOT explicitly use
$\phi^{-1}(x)$, $f(x)$, and $g(x)$. So all these activation
functions and many others can be used in LPOM. } \centering
\scalebox{0.95}[0.95]{
\begin{tabular}{p{1cm}||p{2.5cm}|p{2.9cm}|p{6.2cm}|l}
  \hline
   function & $\phi(x)$ & $\phi^{-1}(x)$ & $f(x)$ & $g(x)$ \\
  \hline\hline
  sigmoid & $\frac{1}{1\!+\!e^{-x}}$ &
  $\begin{array}{l}
  \log\frac{x}{1\!-\!x}\\
  (0<x<1)
  \end{array}$&
  $\left\{\!\!\begin{array}{rl}
  \!\!x\log x\!+\!(1\!-\!x)\log(1\!-\!x)\!-\!\frac{x^2}{2},&\!\!0\!<\!x\!<\!1\\
  +\infty,& \!\!\mbox{otherwise}
  \end{array} \right.$
  & $\log(e^x+1)\!-\!\frac{x^2}{2}$ \\
  \hline
  tanh & $\frac{e^x\!-\!e^{-x}}{e^x\!+\!e^{-x}}$ &
  $\begin{array}{l}
  \frac{1}{2}\log \frac{1\!+\!x}{1\!-\!x}\\
  (-1<x<1)
  \end{array}$
  &
  $\left\{\begin{array}{rl}
  \!\!\frac{1}{2}[(1\!-\!x)\log(1\!-\!x)\qquad\quad&\\
  \!\!+\!(1\!+x\!)\log(1\!+\!x))]\!-\!\frac{x^2}{2},&\!\!-1<x<1\\
  +\infty,& \!\!\mbox{otherwise}
  \end{array} \right.$
  & $\log(\frac{e^x\!+\!e^{-x}}{2})\!-\!\frac{x^2}{2}$  \\
  \hline
  ReLU & $\max(x,0)$ &
  $\left\{\begin{array}{rl}
  \!\!x,&\!\!x\!>\!0\\
  \!\!(-\infty,0),&\!\!x\!=\!0
  \end{array} \right.$
  &
  $\left\{\begin{array}{rl}
  \!\!0,&\!\!x\!\geq\!0\\
  \!\!+\infty,& \!\!\mbox{otherwise}
  \end{array} \right.$
  & $\left\{\begin{array}{rl}
\!\!0, & \!\!x\!\geq\!0 \\
\!\!-\frac{1}{2}x^2, &\!\!x\!<\!0 \\
\end{array} \right.$\\
  \hline
  leaky\mbox{~~~} ReLU & $\left\{\begin{array}{rl}
\!\!x, &\!\! x\!\geq\!0 \\
\!\!\alpha x, &\!\!x\!<\!0 \\
\end{array} \right.$ & $\left\{\begin{array}{rl}
\!\!x, & \!\!x\!\geq\!0 \\
\!\!x/\alpha, &\!\!x\!<\!0 \\
\end{array} \right.$ & $\left\{\begin{array}{rl}
\!\!0, & \!\!x\!\geq\!0 \\
\!\!\frac{1\!-\!\alpha}{2\alpha}x^2, &\!\!x\!<\!0 \\
\end{array} \right.$ & $\left\{\begin{array}{rl}
\!\!0, & \!\!x\!\geq\!0 \\
\!\!\frac{\alpha\!-\!1}{2}x^2, &\!\!x\!<\!0 \\
\end{array} \right.$\\
  \hline
  ELU & $\left\{\!\!\!\begin{array}{lr}
x, & \!\! x\!\geq\!0 \\
\alpha(e^x\!-\!1), & \!\! x\!<\!0 \\
\end{array} \right.$ & $\left\{\!\!\!\begin{array}{lr}
x, & \! x\!\geq\!0 \\
\log(1\!+\!\frac{x}{\alpha}), & \! x\!<\!0 \\
\end{array} \right.$ &  $\left\{\!\!\! \begin{array}{lr}
0, & \! x\!\geq\!0 \\
(\alpha\!+\!x)(\log(\frac{x}{\alpha}\!+\!1)\!-\!1)\!-\!\frac{x^2}{2}, & \! x\!<\!0 \\
\end{array} \right.$ & $\left\{\!\!\! \begin{array}{lr}
0, & \! x\!\geq\!0 \\
\alpha(e^x\!-\!x)\!-\!\frac{x^2}{2}, & \! x\!<\!0 \\
\end{array} \right.$\\
\hline
softplus & $\log(1\!+\!e^x)$ & $\log(e^x\!-\!1)$ & No analytic expression & No analytic expression \\ 
\hline
\end{tabular}
}
\label{tab:demo_functions}
\end{table*}

\citet{taylor2016training} also considered solving
problem~\eqref{problem_3var}. Inspired
by ADMM~\citep{lin2011linearized}, they added a Lagrange
multiplier to the output layer to achieve exact enforcement of the
equality constraint at the output layer, which yields
\begin{equation}\label{problem_3split_Taylor}
\begin{split}
\hspace{-0.5cm}
&\min_{\{W^i\},\{X^i\},\{U^i\},M}{\ell}(U^n,L)\!+\!\frac{\beta}{2}\!\left\|U^{n}\!-\!W^{n-1}X^{n\!-\!1}\!+\!M\right\|_F^2\\
\hspace{-0.5cm}
&~~+\!\!\sum_{i=2}^{n-1}\frac{\mu_i}{2} (\|U^{i}\!-\!W^{i-1}X^{i\!-\!1}\|_F^2\!+\!\|X^i\!-\!\phi(U^{i})\|_F^2),
\end{split}
\end{equation}
where $M$ is the Lagrange multiplier and $\beta>0$ and $\mu_i>0$
are constants. Note that the activation function on the output
layer is absent. So \eqref{problem_3split_Taylor} is only a
heuristic adaptation of ADMM. \citet{zhang2016efficient} adopted a
similar technique but used a different variable splitting scheme:
\begin{equation}\label{problem_3var_zhang}
\begin{split}
\hspace{-0.2cm}
&\min_{\{W^i\}, \{X^i\}, \{U^i\}}{\ell}(X^n,L)\\
\hspace{-0.2cm}
&\text{s.t.}~U^{i-1}\!=\!X^{i-1}, X^i\!=\!\phi(W^{i\!-\!1}U^{i\!-\!1}),~i\!=\!2,3,\cdots,n.
\end{split}
\end{equation}
Despite the nonlinear equality constraints, which ADMM is not
designed to handle, they added a Lagrange multiplier for each
constraint in~\eqref{problem_3var_zhang}. Then the augmented
Lagrangian problem is as follows:
\begin{equation}\label{problem_3split_zhang}
\begin{split}
&\min_{\{W^i\}, \{X^i\}, \{U^i\}, \{A^i\}, \{B^i\}}{\ell}(X^n,L)\\
&\quad+\!\frac{\mu}{2}\!\sum_{i=2}^{n} \left(\left\|U^{i-1}\!-\!X^{i-1}\!+\!A^{i-1}\right\|_F^2\right.\\
&\qquad+\!\left.\left\|X^i\!-\!\phi(W^{i-1}U^{i\!-\!1})\!+\!B^{i-1}\right\|_F^2\right),
\end{split}
\end{equation}
where $A^i$ and $B^i$ are the Lagrange multipliers.

Different from naively applying the penalty method and ADMM, \citet{zhang2017convergent} interpreted the ReLU activation function as a simple
smooth convex optimization problem.
Namely, the equality constraints in problem~\eqref{problem} using
the ReLU activation function can be rewritten as a convex
minimization problem:
\begin{equation}
\begin{split}
X^i&\!=\!\phi(W^{i-1}X^{i\!-\!1})\\
   &\!=\!\max(W^{i-1}X^{i\!-\!1}, \textbf{0})\\
   &\!=\!\argmin\limits_{U^i\geq \textbf{0}}\|U^i-W^{i-1}X^{i\!-\!1}\|_F^2,
\end{split}
\end{equation}
where $\textbf{0}$ is a zero matrix with an appropriate size.
Based on this observation, they approximated problem~\eqref{problem}
with the activation function being ReLU in the following way:
\begin{equation}\label{problem_lifted}
\begin{split}
&\min_{\{W^i\}, \{X^i\}}\!{\ell}(X^n,L)\!+\!\sum_{i=2}^n\frac{\mu_i}{2}\|X^i\!-\!W^{i-1}X^{i\!-\!1}\|_F^2\\
&~~\text{s.t.}~X^i\geq \textbf{0},~i\!=\!2,3,\cdots,n,
\end{split}
\end{equation}
where the penalty terms encode both the network structure and
activation function. Unlike MAC and ADMM based methods, it does
not include nonlinear activations. Moreover, the major advantage
is that problem~\eqref{problem_lifted} is block multi-convex,
i.e., the problem is convex w.r.t. each block of variables when
the remaining blocks are fixed. They developed a new BCD method to
solve it. They also empirically demonstrated the superiority of
the proposed approach over SGD based solvers in
Caffe~\citep{jia2014caffe} and the \mbox{ADMM} based
method~\citep{zhang2016efficient}. \citet{askari2018lifted}
inherited the same idea. By introducing a more complex convex
minimization problem, they could handle more general activation
functions, such as sigmoid, leaky ReLU, and sine functions.

\section{Lifted Proximal Operator Machine}
In this section, we describe our basic idea of LPOM and its advantages over existing DNN training methods. 

\subsection{Reformulation by Proximal Operator}
We assume that the activation function $\phi$ is non-decreasing.
Then $\phi^{-1}(x)=\{y|x=\phi(y)\}$ is a convex set.
$\phi^{-1}(x)$ is a singleton $\{y\}$ iff $\phi$ is strictly
increasing at $\phi(y)$. We want to construct an objective
function $h(x,y)$, parameterized by $y$, such that its minimizer
is exactly $x=\phi(y)$. Accordingly, we may replace the constraint
$x=\phi(y)$ by minimizing $h(x,y)$, which can be added to the loss
of DNNs as a penalty.

There are two basic operations to update variables in an
optimization problem: gradient update and proximal operator. Since
we are constructing an optimization problem and proximal operator
is indeed so~\citep{parikh2014proximal}:
\begin{equation}\label{prox_function}
\text{prox}_f(y)\!=\!\argmin_x f(x)\!+\!\frac{1}{2}(x-y)^2,
\end{equation}
we consider using proximal operator to construct the optimization
problem. Define $$f(x)\!= \!\int_0^x (\phi^{-1}(y)\!-\!y)dy.$$
Note that $f(x)$ is well defined\footnote{We allow $f$ to take
value of $+\infty$.} even if $\phi^{-1}(y)$ is non-unique for some
$y$ between 0 and $x$. Anyway, $\phi^{-1}$, $f$, and $g$ (to be
defined later) will \emph{not} be explicitly used in our
computation. It is easy to show that the optimality condition of
\eqref{prox_function} is $0\in (\phi^{-1}(x)-x) + (x-y)$. So the
solution to \eqref{prox_function} is exactly $x=\phi(y)$.

Note that $f(x)$ is a unit-variate function. For a matrix
$X\!=\!(X_{kl})$, we define $f(X)\!=\!(f(X_{kl}))$. Then the
optimality condition of the following minimization problem:
\begin{equation}\label{matrix_prox}
\argmin_{X^i}
\mathbf{1}^Tf(X^i)\mathbf{1}\!+\!\frac{1}{2}\|X^i\!-\!W^{i-1}X^{i-1}\|^2_F,
\end{equation}
where $\mathbf{1}$ is an all-one column vector, is
\begin{equation}
\mathbf{0}\in \phi^{-1}(X^i)-W^{i-1}X^{i-1},
\end{equation}
where $\phi^{-1}(X^i)$ is also defined element-wise. So the
optimal solution to \eqref{matrix_prox} is
\begin{equation}\label{equality_constraint}
X^i\!=\!\phi(W^{i-1}X^{i-1}),
\end{equation}
which is exactly the constraint in problem~\eqref{problem}. So we
may approximate problem~\eqref{problem} naively as:
\begin{equation}\label{problem_f_only}
\begin{split}
\hspace{-0.2cm}
&\min\limits_{\{W^i\},\{X^i\}} {\ell}(X^n,L)\\
&\quad+\sum\limits_{i=2}^n
\mu_i\left(\mathbf{1}^Tf(X^i)\mathbf{1}\!+\!\frac{1}{2}\|X^{i}\!-\!W^{i-1}X^{i-1}\|_F^2\right).
\end{split}
\end{equation}
However, its optimality conditions for $\{X^i\}_{i=2}^{n-1}$ are
as follows:
\begin{equation}\label{incorrect_optimality}
\begin{split}
\mathbf{0}\!\in&\mu_i(\phi^{-1}(X^i)\!-\!W^{i-1}X^{i-1})\\
&+\!\mu_{i+1}(W^i)^T(W^iX^i\!-\!X^{i+1}),\,i=2,\cdots,n-1.
\end{split}
\end{equation}
We can clearly see that the equality constraints
\eqref{equality_constraint} in problem~\eqref{problem} do
\emph{not} satisfy the above!

In order that the equality constraints \eqref{equality_constraint}
fulfil the optimality conditions of approximating problem, we need
to modify \eqref{incorrect_optimality} as
\begin{equation}\label{correct_optimality}
\begin{split}
\mathbf{0}\!\in&\mu_i(\phi^{-1}(X^i)\!-\!W^{i-1}X^{i-1})\\
&+\!\mu_{i+1}(W^i)^T(\phi(W^iX^i)\!-\!X^{i+1}),\,i=2,\cdots,n-1.
\end{split}
\end{equation}
This corresponds to the following problem:
\begin{equation}\label{problem_LPOM}
\begin{split}
\hspace{-0.2cm}
&\min\limits_{\{W^i\},\{X^i\}} {\ell}(X^n,L)\!+\!\sum\limits_{i=2}^n \mu_i\bigg(\mathbf{1}^Tf(X^i)\mathbf{1}\\
\hspace{-0.2cm}
&~~~~~~\left.\!+\mathbf{1}^Tg(W^{i-1}\!X^{i-1})\mathbf{1}\!+\!\frac{1}{2}\|X^{i}\!-\!W^{i-1}X^{i-1}\|_F^2\right),
\end{split}
\end{equation}
where $$g(x)\!= \!\int_0^x (\phi(y)\!-\!y)dy.$$ $g(X)$ is also
defined element-wise for a matrix $X$. The $f(x)$'s and $g(x)$'s
of some representative activation functions are shown in
Table~\ref{tab:demo_functions}. \eqref{problem_LPOM} is the
formulation of our proposed LPOM, where we highlight that the
introduction of $g$ is non-trivial and non-obvious.

\subsection{Advantages of LPOM}
Denote the objective function of LPOM  in~\eqref{problem_LPOM} as
$F(W,X)$. Then we have the following theorem:
\begin{theorem}\label{convex_theorem}
Suppose ${\ell}(X^n, L)$ is convex in $X^n$ and $\phi$ is
non-decreasing. Then $F(W,X)$ is block multi-convex, i.e., convex
in each $X^i$ and $W^i$ if all other blocks of variables are
fixed.
\end{theorem}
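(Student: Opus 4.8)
The plan is to reduce the claim to two one-variable convexity facts and then reassemble, exploiting that $f$, $g$, and the Frobenius penalties all split entrywise. Concretely, I would first observe that it suffices to prove, for arbitrary constants $c,d\in\mathbb{R}$, that the scalar functions
\begin{equation*}
p(x)=f(x)+\tfrac12(x-d)^2\quad\text{and}\quad q(z)=g(z)+\tfrac12(c-z)^2
\end{equation*}
are convex. Indeed, $f(X)$ and $g(X)$ are defined entrywise and $\|X^i-W^{i-1}X^{i-1}\|_F^2$ splits over matrix entries, so once each scalar combination is convex, block multi-convexity of $F$ will follow from three standard facts: sums of convex functions are convex, the composition of a convex function with an affine map is convex, and $\ell(\cdot,L)$ is convex by hypothesis.

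For the two scalar facts I would avoid second derivatives, since $\phi^{-1}$ may be set-valued and non-differentiable (and $f$ may take the value $+\infty$), and instead argue through monotonicity of the (sub)derivative. From the definitions one reads off $f'(x)=\phi^{-1}(x)-x$ and $g'(z)=\phi(z)-z$, so $p'(x)=\phi^{-1}(x)-d$ and $q'(z)=\phi(z)-c$. Because $\phi$ is non-decreasing, both $\phi$ and, as a monotone set-valued map, $\phi^{-1}$ are non-decreasing; hence $p'$ and $q'$ are non-decreasing, which gives convexity of $p$ and $q$. The point worth stressing is that the quadratic penalty is exactly what cancels the hidden $-\tfrac12 x^2$ curvature inside $f$ and $g$: neither $f$ nor $g$ is convex on its own (for instance $g''=\phi'-1<0$ for saturating activations such as the sigmoid), and only after pairing does the effective curvature reduce to $(\phi^{-1})'\ge0$ or $\phi'\ge0$.

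It then remains to check, block by block, that each penalty is paired with the correct quadratic. I would note that $W^j$ enters $F$ only through $\mu_{j+1}\big(\mathbf{1}^Tg(W^jX^j)\mathbf{1}+\tfrac12\|X^{j+1}-W^jX^j\|_F^2\big)$, which is $q$ applied entrywise to the affine image $W^jX^j$, hence convex in $W^j$. For $X^j$ with $2\le j\le n-1$, the dependent terms are $\mu_j\big(\mathbf{1}^Tf(X^j)\mathbf{1}+\tfrac12\|X^j-W^{j-1}X^{j-1}\|_F^2\big)$, which is $p$ applied entrywise, together with the same $g$-plus-quadratic group as above; their sum is convex in $X^j$. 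For $X^n$ one adds the convex loss $\ell(X^n,L)$ to the $f$-plus-quadratic group, and there is no trailing $g$ term. The only real subtlety, and the step I would treat most carefully, is this bookkeeping: the single quadratic $\tfrac12\|X^i-W^{i-1}X^{i-1}\|_F^2$ must serve as the companion of $f(X^i)$ when $X^i$ varies but as the companion of $g(W^{i-1}X^{i-1})$ when $X^{i-1}$ or $W^{i-1}$ varies, and one must verify that these two roles never collide, which they do not because all other blocks are frozen when a single block is optimized.
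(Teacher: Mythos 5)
Your proof is correct, and it rests on the same key insight as the paper's: the quadratic penalty contributes exactly the $+\tfrac12x^2$ needed to cancel the concave $-\tfrac12x^2$ hidden inside $f$ and $g$, leaving only the convex primitives $\tilde f(x)=\int_0^x\phi^{-1}(y)\,dy$ and $\tilde g(x)=\int_0^x\phi(y)\,dy$. The packaging differs. The paper performs this cancellation once and globally, rewriting $F$ as $\ell(X^n,L)+\sum_i\mu_i\bigl(\mathbf{1}^T\tilde f(X^i)\mathbf{1}+\mathbf{1}^T\tilde g(W^{i-1}X^{i-1})\mathbf{1}-\langle X^i,W^{i-1}X^{i-1}\rangle\bigr)$; after that, one only checks that $\tilde f,\tilde g$ are convex as integrals of non-decreasing maps, that precomposition with the linear map $X^{i-1}\mapsto W^{i-1}X^{i-1}$ (or $W^{i-1}\mapsto W^{i-1}X^{i-1}$) preserves convexity, and that the bilinear cross term is linear in each single block. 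You instead keep each quadratic intact and prove the two scalar lemmas that $f(x)+\tfrac12(x-d)^2$ and $g(z)+\tfrac12(c-z)^2$ are convex, letting the one quadratic serve as companion to $f$ or to $g$ depending on which block varies. The two routes are algebraically equivalent, and your per-block bookkeeping is sound: when a single block is optimized, only one of the two roles of each quadratic is active. The paper's rewriting has the small advantage that the ``double duty'' you flag as the delicate step becomes a non-issue, since the cross term is manifestly linear in each block; your version makes the cancellation mechanism, and hence why the $g$ term had to be introduced at all, more visible, and your treatment of the set-valued $\phi^{-1}$ via monotonicity of the subderivative is at least as careful as the paper's, which simply asserts convexity of $\tilde f$ from monotonicity of $\phi^{-1}$.
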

\begin{proof}
    $F(W,X)$ can be simplified to
    \begin{equation}\label{problem_LPOM_simplfied}
    \begin{split}
    \hspace{-0.2cm}
    &F(W,X)={\ell}(X^n,L)\!+\!\sum\limits_{i=2}^n \mu_i\left(\mathbf{1}^T\tilde{f}(X^i)\mathbf{1}\frac{}{}\right.\\
    \hspace{-0.2cm}
    &~~~~~~\!+\mathbf{1}^T\tilde{g}(W^{i-1}\!X^{i-1})\mathbf{1}\!-\!\langle X^{i},W^{i-1}X^{i-1}\rangle\Big),
    \end{split}
    \end{equation}
where $\tilde{f}(x)\!=\!\int_0^x \phi^{-1}(y) dy$ and
$\tilde{g}(x)\!=\!\int_0^x \phi(y) dy$. Since both $\phi$ and
$\phi^{-1}$ are non-decreasing, both $\tilde{f}(x)$ and
$\tilde{g}(x)$ are convex. It is easy to verify that
$\mathbf{1}^T\tilde{g}(W^{i-1}\!X^{i-1})\mathbf{1}$ is convex in
$X^{i-1}$ when $W^{i-1}$ is fixed and convex in $W^{i-1}$ when
$X^{i-1}$ is fixed. The remaining term $\langle
X^{i},W^{i-1}X^{i-1}\rangle$ in $F(W,X)$ is linear in one block
when the other two blocks are fixed. The proof is completed.
\end{proof}

Theorem~\ref{convex_theorem} allows for efficient BCD algorithms
to solve LPOM and guarantees that the optimal solutions for
updating $X^i$ and $W^i$ can be obtained, due to the convexity of
subproblems. In contrast, the subproblems in the penalty and the
ADMM based methods are all nonconvex.

When compared with ADMM based
methods~\citep{taylor2016training,zhang2016efficient}, LPOM does
not require Lagrange multipliers and more auxiliary variables than
$\{X^i\}_{i=2}^{n}$. Moreover, we have designed delicate
algorithms so that no auxiliary variables are needed either when
solving LPOM (see Section ``Solving LPOM''). So LPOM has much less
variables than ADMM based methods and hence saves memory greatly.
Actually, its memory cost is close to that of SGD\footnote{As
noted before, SGD needs to save $\{X^i\}_{i=2}^n$, although it
does not treat $\{X^i\}_{i=2}^n$ as auxiliary variables.}.


When compared with the penalty
methods~\citep{carreira2014distributed,zeng2018global}, the
optimality conditions of LPOM are simpler. For example, the
optimality conditions for $\{X^i\}_{i=2}^{n-1}$ and
$\{W^i\}_{i=1}^{n-1}$ in LPOM are \eqref{correct_optimality} and
\begin{equation}\label{KKT_problem_LPOM_W}
\begin{split}
(\phi(W^{i}X^{i})\!-\!X^{i+1})(X^{i})^T \!=\! \mathbf{0}, \quad
i\!=\!1,\cdots,n-1,
\end{split}
\end{equation}
while those for MAC are
\begin{equation}\label{KKT_problem_penalty}
\begin{split}
\hspace{-0.2cm}
&(X^i\!-\!\phi(W^{i-1}X^{i-1})) \\
\hspace{-0.2cm}
&+(W^{i})^T[(\phi(W^iX^i)\!-\!X^{i+1})\!\circ\!\phi'(W^iX^i)] \!=\! \mathbf{0},\\
\hspace{-0.2cm} & i\!=\!2,\cdots,n\!-\!1.
\end{split}
\end{equation}
and
\begin{equation}\label{KKT_problem_penalty_W}
\begin{split}
[(\phi(W^{i}X^{i})\!-\!X^{i+1})\!\circ\!\phi'(W^iX^i)](X^{i})^T\!=\!\mathbf{0},
i\!=\!1,\cdots,n\!-\!1,
\end{split}
\end{equation}
where $\circ$ denotes the element-wise multiplication. We can see
that the optimality conditions for MAC have extra $\phi'(W^iX^i)$,
which is nonlinear. The optimality conditions for
\cite{zeng2018global} can be found in Supplementary Materials.
They also have an extra $\phi'(U^i)$. This may imply that the
solution sets of MAC and \citep{zeng2018global} are more complex
and also ``larger'' than that of LPOM. So it may be easier to find
good solutions of LPOM.

When compared with the convex optimization reformulation
methods~\citep{zhang2017convergent,askari2018lifted}, LPOM can
handle much more general activation functions. Note that
\cite{zhang2017convergent} only considered ReLU. Although
\cite{askari2018lifted} claimed that their formulation can handle
general activation functions, its solution method was still
restricted to ReLU. Moreover, \citet{askari2018lifted} do not
have a correct reformulation as its optimality conditions for
$\{X^i\}_{i=2}^{n-1}$ and $\{W^i\}_{i=1}^{n-1}$ are
\begin{equation*}\label{KKT_problem_lifted_NN_X}
\begin{split}
\mathbf{0}\in&\mu_i(\phi^{-1}(X^{i})\!-\!W^{i-1}X^{i-1})\!-\!\mu_{i+1}(W^i)^TX^{i+1},\\
&i\!=\!2,\cdots,n\!-\!1,
\end{split}
\end{equation*}
and $X^{i+1}(X^i)^T\!=\!\mathbf{0},\,i\!=\!1,\cdots,n\!-\!1,$
respectively. It is clear that the equality
constraints~\eqref{equality_constraint} do not satisfy the above. Moreover, somehow \cite{askari2018lifted}
further added extra constraints $X^i\geq \mathbf{0}$, no matter
what the activation function is. So their reformulation cannot
approximate the original DNN \eqref{problem} well. This may
explain why \cite{askari2018lifted} could not obtain good results.
Actually, they can only provide good initialization for SGD.

When compared with gradient based methods, such as SGD, LPOM can
work with any non-decreasing Lipschitz continuous activation
function without numerical difficulties\footnote{Of course, for
different choices of activation functions the performances of DNNs
may differ.}, including being saturating (e.g., sigmoid and tanh)
and non-differentiable (e.g., ReLU and leaky ReLU) and can update
the layer-wise weights and activations \emph{in parallel} (see
next section)\footnote{But our current implementation for the
experiments is still serial.}. In contrast, gradient based methods
can only work with limited activation functions, such as ReLU,
leaky ReLU, and softplus, in order to avoid the gradient vanishing
or blow-up issues, and they cannot be parallelized when computing
the gradient and the activations.



\section{Solving LPOM}
Thanks to the block multi-convexity
(Theorem~\ref{convex_theorem}), LPOM can be solved by BCD. Namely,
we update $X^i$ or $W^i$ by fixing all other blocks of variables.
The optimization can be performed using a mini-batch of training
samples. The whole algorithm to solve LPOM is summarized in
Algorithm~\ref{alg:solve_LPOM}. Below we give more details.

\begin{algorithm}[tb]
    \begin{algorithmic}
     \REQUIRE training dataset, batch size $m_1$, iteration no.s $S$ and $K_1$.
   \STATE  {\textbf{for} $s=1$ to $S$ \textbf{do}}
   \STATE $\quad$ Randomly choose $m_1$ training samples $X^1$ and $L$.
   \STATE $\quad$ Solve $\{X^{i}\}_{i=2}^{n-1}$ by iterating  Eq.~\eqref{eq:update_Xi} for $K_1$ times (or until convergence).
   \STATE $\quad$ Solve $X^{n}$ by iterating Eq.~\eqref{solution_Xn_iteration} for $K_1$ times.
   \STATE $\quad$ Solve $\{W^i\}_{i=1}^{n-1}$ by applying Algorithm~\ref{alg:solve_x} to \eqref{problem_update_W}.
     \STATE  {\textbf{end for}}
    \ENSURE  $\{W^i\}_{i=1}^{n-1}$.
    \end{algorithmic}
    \caption{Solving LPOM}
    \label{alg:solve_LPOM}
\end{algorithm}


\subsection{Updating $\{X^i\}_{i=2}^n$}
We first introduce the serial method for updating
$\{X^i\}_{i=2}^n$. We update $\{X^i\}_{i=2}^{n}$ from $i=2$ to $n$
successively, just like the feed-forward process of DNNs. For
$i=2,\cdots,n-1$, with $\{W^i\}_{i=1}^{n-1}$ and other
$\{X^j\}_{j=2,j\neq i}^n$ fixed, problem \eqref{problem_LPOM}
reduces to
\begin{equation}\label{problem_Xi}
\begin{split}
&\min\limits_{X^i} \mu_i \left(\mathbf{1}^Tf(X^i)\mathbf{1}\!+\!\frac{1}{2}\|X^{i}\!-\!W^{i-1}X^{i-1}\|_F^2 \right)\\
&+\!\mu_{i+1}\left(\mathbf{1}^Tg(W^{i}\!X^{i})\mathbf{1}\!+\!\frac{1}{2}\|X^{i+1}\!-\!W^{i}X^{i}\|_F^2\right).
\end{split}
\end{equation}
The optimality condition is:
\begin{equation}
\begin{split}
\mathbf{0}\in &\mu_i(\phi^{-1}(X^i)\!-\!W^{i-1}X^{i-1}) \\
&+\mu_{i+1}((W^{i})^T(\phi(W^iX^i)\!-\!X^{i+1})).
\end{split}
\end{equation}
So we may update $X^i$ by iterating
\begin{equation}\label{eq:update_Xi}
\begin{split}
X^{i,t+1}\!=\!\phi\!\left(W^{i\!-\!1}X^{i\!-\!1}\!-\!\frac{\mu_{i\!+\!1}}{\mu_i}(W^{i})^T(\phi(W^iX^{i,t})\!-\!X^{i\!+\!1})\right)
\end{split}
\end{equation}
until convergence, where the superscript $t$ is the iteration
number. The convergence analysis is as follows:
\begin{theorem}\label{theorem_solve_Xi}
Suppose that $|\phi'(x)|\leq\gamma$.\footnote{For brevity, we
assume that $\phi$ is differentiable. It is not hard to extend our
analysis to the non-differentiable case, but the analysis will be
much more involved.} If $\rho<1$, then the iteration is convergent
and the convergent rate is linear, where
$\rho\!=\!\frac{\mu_{i+1}}{\mu_i}\gamma^2\sqrt{\left\|
|(W^i)^T||W^i|\right\|_1\left\| |(W^i)^T| |W^i|\right\|_\infty} $.
\end{theorem}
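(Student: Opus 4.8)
The plan is to read \eqref{eq:update_Xi} as a fixed-point iteration $X^{i,t+1}=T(X^{i,t})$ for the map
\[
T(X)=\phi\!\left(W^{i-1}X^{i-1}-\tfrac{\mu_{i+1}}{\mu_i}(W^i)^T\big(\phi(W^iX)-X^{i+1}\big)\right),
\]
and to show that $T$ is a contraction in the Frobenius norm with modulus exactly $\rho$. Once this is done, the Banach fixed-point (contraction mapping) theorem delivers everything at once: existence and uniqueness of the fixed point (which is precisely the solution of the optimality condition above \eqref{eq:update_Xi}), and the linear convergence rate, since $\|X^{i,t}-X^{i,\ast}\|_F\le\rho^{\,t}\|X^{i,0}-X^{i,\ast}\|_F$.

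First I would bound the \emph{entrywise} discrepancy $|T(X_1)-T(X_2)|$ between two iterates. Since $|\phi'|\le\gamma$, the function $\phi$ is $\gamma$-Lipschitz and acts elementwise, so the outer application gives $|T(X_1)-T(X_2)|\le\gamma\,|Y_1-Y_2|$ entrywise, where $Y_j$ is the argument of $\phi$. The only $X$-dependent part of $Y_j$ is $-\tfrac{\mu_{i+1}}{\mu_i}(W^i)^T\phi(W^iX_j)$, hence $Y_1-Y_2=-\tfrac{\mu_{i+1}}{\mu_i}(W^i)^T\big(\phi(W^iX_1)-\phi(W^iX_2)\big)$. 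Applying the entrywise triangle inequality for a matrix product, $|AB|\le|A|\,|B|$, together with a second use of the $\gamma$-Lipschitzness of $\phi$ on $\phi(W^iX_1)-\phi(W^iX_2)$, I reach the entrywise estimate
\[
|T(X_1)-T(X_2)|\le\tfrac{\mu_{i+1}}{\mu_i}\gamma^2\,|(W^i)^T|\,|W^i|\,|X_1-X_2|=\tfrac{\mu_{i+1}}{\mu_i}\gamma^2\,N\,|X_1-X_2|,
\]
where $N=|(W^i)^T|\,|W^i|$ is nonnegative (indeed symmetric positive semidefinite).

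Next I would convert this entrywise domination into a norm contraction. Because the Frobenius norm is monotone under entrywise domination of nonnegative matrices, and because left multiplication is bounded by the spectral norm, $\|NA\|_F\le\|N\|_2\|A\|_F$ (and $\||X_1-X_2|\|_F=\|X_1-X_2\|_F$), I obtain $\|T(X_1)-T(X_2)\|_F\le\tfrac{\mu_{i+1}}{\mu_i}\gamma^2\|N\|_2\|X_1-X_2\|_F$. Finally I invoke the standard interpolation inequality $\|N\|_2\le\sqrt{\|N\|_1\|N\|_\infty}$, which follows from $\|N\|_2^2=\lambda_{\max}(N^TN)\le\|N^TN\|_\infty\le\|N^T\|_\infty\|N\|_\infty=\|N\|_1\|N\|_\infty$. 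This yields the Lipschitz modulus $\rho=\tfrac{\mu_{i+1}}{\mu_i}\gamma^2\sqrt{\|N\|_1\|N\|_\infty}$ precisely as stated, so $\rho<1$ makes $T$ a contraction.

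The main obstacle is the careful propagation of the entrywise absolute value through the two multiplications by $W^i$ and $(W^i)^T$ and through $\phi$: it is essential to keep the bound \emph{entrywise} via $|AB|\le|A|\,|B|$ rather than prematurely passing to a norm, since that is exactly what produces the matrix $N=|(W^i)^T|\,|W^i|$ with absolute values \emph{inside} and makes $\sqrt{\|N\|_1\|N\|_\infty}$ (rather than a cruder bound in $\|W^i\|_2^2$) appear. The only other care needed is verifying the two norm inequalities used above. I note finally that the differentiability assumption on $\phi$ is not essential: in the non-differentiable case one simply replaces $\gamma=\sup|\phi'|$ by the Lipschitz constant of $\phi$, and the identical argument goes through.
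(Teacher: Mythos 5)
Your proof is correct and arrives at the same contraction modulus $\rho$, but by a genuinely different route from the paper's. The paper assumes $\phi$ is differentiable, writes out the four-index Jacobian $J_{kl,pq}$ of the iteration map entry by entry, bounds $\|J\|_1$ and $\|J\|_\infty$ by $\frac{\mu_{i+1}}{\mu_i}\gamma^2\left\| |(W^i)^T||W^i|\right\|_1$ and $\frac{\mu_{i+1}}{\mu_i}\gamma^2\left\| |(W^i)^T||W^i|\right\|_\infty$ respectively, and then applies $\|J\|_2\leq\sqrt{\|J\|_1\|J\|_\infty}$ to the Jacobian itself, concluding contraction from the uniform bound on $\|\nabla f\|$. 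You instead establish the entrywise domination $|T(X_1)-T(X_2)|\leq\frac{\mu_{i+1}}{\mu_i}\gamma^2\,N\,|X_1-X_2|$ with $N=|(W^i)^T|\,|W^i|$ directly from the Lipschitz property of $\phi$ (used twice) and $|AB|\leq|A|\,|B|$, pass to the Frobenius norm via monotonicity and $\|NA\|_F\leq\|N\|_2\|A\|_F$, and only then invoke $\|N\|_2\leq\sqrt{\|N\|_1\|N\|_\infty}$ --- applied to $N$ rather than to $J$. The two computations land on the same $\rho$ (your intermediate bound $\frac{\mu_{i+1}}{\mu_i}\gamma^2\|N\|_2$ is in fact slightly sharper before you relax it), and your version buys two things: it never needs $\phi$ to be differentiable, so the paper's footnoted claim that the non-differentiable case would be ``much more involved'' is bypassed entirely, and it avoids the bookkeeping of the Jacobian tensor. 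What the paper's computation buys in exchange is an explicit derivative formula that makes the origin of the $\gamma^2$ factor (one $\phi'$ outside the outer $\phi$, one inside from $\phi(W^iX)$) visually transparent. The auxiliary facts you rely on (monotonicity of $\|\cdot\|_F$ under entrywise domination, $\|NA\|_F\leq\|N\|_2\|A\|_F$, and the interpolation inequality) are standard and your justifications of them are sound.
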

The proof can be found in Supplementary Materials. In the above,
$|A|$ is a matrix whose entries are the absolute values of $A$,
$\|\cdot\|_1$ and $\|\cdot\|_{\infty}$ are the matrix 1-norm
(largest absolute column sum) and the matrix $\infty$-norm
(largest absolute row sum), respectively.

When considering $X^n$, problem~\eqref{problem_LPOM} reduces to
\begin{equation}\label{problem_Xn}
\begin{split}
\min\limits_{X^n}
{\ell}(X^n,L)\!+\!\mu_n\!\left(\!\mathbf{1}^Tf(X^i)\mathbf{1}\!+\!\frac{1}{2}\|X^{n}\!-\!W^{n\!-\!1}X^{n\!-\!1}\|_F^2\!\right).
\end{split}
\end{equation}
The optimality condition is\footnote{For simplicity, we also
assume that the loss function is differentiable w.r.t. $X^n$.}
\begin{equation}\label{solution_Xn}
\begin{split}
\mathbf{0}\in\frac{\partial {\ell}(X^n,L)}{\partial
X^n}\!+\!\mu_n(\phi^{-1}(X^n)\!-\!W^{n-1}X^{n-1}).
\end{split}
\end{equation}
So we may update $X^n$ by iterating
\begin{equation}\label{solution_Xn_iteration}
\begin{split}
X^{n,t+1}\!=\!\phi\left(W^{n-1}X^{n\!-\!1}\!-\!\frac{1}{\mu_n}\frac{\partial
{\ell}(X^{n,t},L)}{\partial X^n}\right)
\end{split}
\end{equation}
until convergence. The convergence analysis is as follows:
\begin{theorem}\label{theorem_solve_Xn}
Suppose that $|\phi'(x)|\!\leq\!\gamma$ and $\left\|
\left|\left(\frac{\partial^2 {\ell(X,L)} }{\partial X_{kl}\partial
X_{pq}} \right)\right| \right\|_1 \!\leq\! \eta$. If $\tau \!<\!
1$, then the iteration is convergent and the convergent rate is
linear, where $\tau\!=\!\frac{\gamma\eta}{\mu_n}$.
\end{theorem}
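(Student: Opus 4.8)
The plan is to read the update \eqref{solution_Xn_iteration} as a fixed-point iteration $X^{n,t+1}=T(X^{n,t})$ for the map
\[
T(X)=\phi\!\left(W^{n-1}X^{n-1}-\frac{1}{\mu_n}\frac{\partial \ell(X,L)}{\partial X}\right),
\]
and to prove that $T$ is a contraction in a suitable norm, so that the Banach fixed-point theorem yields both convergence and the stated linear rate. Since a fixed point of $T$ is exactly a point satisfying the optimality condition \eqref{solution_Xn} of the convex subproblem \eqref{problem_Xn}, the contraction also guarantees that the limit is the unique minimizer of \eqref{problem_Xn}.

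\textbf{Vectorization and the Hessian bound.} First I would vectorize, writing $\x=\mathrm{vec}(X^n)\in\mathbb{R}^{cm}$, so that $\nabla\ell(\x)$ is a vector field whose Jacobian is the Hessian $H(\x)=\left(\frac{\partial^2 \ell}{\partial X_{kl}\partial X_{pq}}\right)$, regarded as a matrix indexed by the flattened pairs $(k,l)$ and $(p,q)$. The integral form of the mean value theorem gives
\[
\nabla\ell(\x)-\nabla\ell(\y)=\left(\int_0^1 H\big(\y+s(\x-\y)\big)\,ds\right)(\x-\y),
\]
and, measuring in the vector $1$-norm, the hypothesis $\left\|\,|H|\,\right\|_1\le\eta$ (the largest absolute column sum of the entrywise absolute value, uniform in the argument) gives $\|\nabla\ell(\x)-\nabla\ell(\y)\|_1\le\eta\,\|\x-\y\|_1$.

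\textbf{Contraction.} Because $\phi$ is applied element-wise with $|\phi'|\le\gamma$, it is $\gamma$-Lipschitz in the $1$-norm. The constant term $W^{n-1}X^{n-1}$ does not depend on the optimization variable, hence cancels in the difference $T(\x)-T(\y)$, and composing the two bounds yields
\[
\|T(\x)-T(\y)\|_1\le\frac{\gamma}{\mu_n}\|\nabla\ell(\x)-\nabla\ell(\y)\|_1\le\frac{\gamma\eta}{\mu_n}\|\x-\y\|_1=\tau\,\|\x-\y\|_1.
\]
When $\tau<1$, $T$ is a contraction on the complete space $(\mathbb{R}^{cm},\|\cdot\|_1)$, so the iterates satisfy $\|X^{n,t}-X^{n,\ast}\|_1\le\tau^t\|X^{n,0}-X^{n,\ast}\|_1$, which is exactly linear convergence. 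This parallels the proof of Theorem~\ref{theorem_solve_Xi}; the single factor $\gamma$ here (versus $\gamma^2$ there) reflects that $\phi$ now appears only once, while $\eta$ plays the role carried there by the weight-matrix norms.

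\textbf{Main obstacle.} The delicate part is the norm bookkeeping: I must pair the vector $1$-norm with the matrix $1$-norm consistently and verify that the entrywise bound $\|\,|H|\,\|_1\le\eta$ controls the operator $1$-norm of the averaged Hessian $\int_0^1 H\,ds$ uniformly along the whole segment $[\y,\x]$, which is where differentiability of $\ell$ and the uniform Hessian bound are essential. A secondary point, as already flagged for Theorem~\ref{theorem_solve_Xi}, is the $\phi$-Lipschitz step in the non-differentiable case; under the differentiability assumed here for both $\ell$ and $\phi$, the remaining estimates are routine.
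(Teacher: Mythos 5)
Your proposal is correct and follows essentially the same route as the paper: both read \eqref{solution_Xn_iteration} as a fixed-point iteration, show the map is a contraction with the same constant $\tau=\gamma\eta/\mu_n$, and invoke the Banach fixed-point theorem for linear convergence. The only technical difference is that you establish the Lipschitz bound directly in the vector $1$-norm via the integral mean value theorem, whereas the paper computes the Jacobian $J_{kl,pq}=-\frac{1}{\mu_n}\phi'(d^{n,t}_{kl})\frac{\partial^2\ell}{\partial X_{kl}\partial X_{pq}}$, bounds $\|J\|_1$ and $\|J\|_\infty$ separately, and then controls the spectral norm through $\|J\|_2\le\sqrt{\|J\|_1\|J\|_\infty}$; your variant is marginally more elementary since it needs only the column-sum bound, but the two arguments are interchangeable here.
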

The proof can also be found in Supplementary Materials. If
${\ell}(X^n,L)$ is the least-square error, i.e.,
${\ell}(X^n,L)\!=\!\frac{1}{2}\|X^n\!-\!L\|_F^2$, then $\left\|
\left|\left(\frac{\partial^2 {\ell(X,L)} }{\partial X_{kl}\partial
X_{pq}}\right) \right| \right\|_1 \!=\! 1$. So we obtain
$\mu_n\!>\!\gamma$.

The above serial update procedure can be easily changed to
parallel update: each $X^i$ is updated using the latest
information of other $X^j$'s, $j\neq i$.



\subsection{Updating $\{W^i\}_{i=1}^{n-1}$}
$\{W^i\}_{i=1}^{n-1}$ can be updated with full parallelism. When
$\{X^i\}_{i=2}^{n}$ are fixed, problem \eqref{problem_LPOM}
reduces to
\begin{equation}\label{problem_update_W_org}
\min\limits_{W^i}
\mathbf{1}^Tg(W^{i}X^{i})\mathbf{1}\!+\!\frac{1}{2}\|W^{i}X^{i}\!-\!X^{i+1}\|_F^2,\,i=1,\cdots,n-1,
\end{equation}
which can be solved in parallel. \eqref{problem_update_W_org} can
be rewritten as
\begin{equation}\label{problem_update_W}
\min\limits_{W^i}
\mathbf{1}^T\tilde{g}(W^{i}X^{i})\mathbf{1}\!-\!\langle
X^{i+1},W^{i}X^{i}\rangle,
\end{equation}
where $\tilde{g}(x)\!=\!\int_0^x \phi(y)dy$, as introduced before.
Suppose that $\phi(x)$ is $\beta$-Lipschitz continuous, which is
true for almost all activation functions in use. Then
$\tilde{g}(x)$ is $\beta$-smooth:
\begin{equation}
\begin{split}
|\tilde{g}'(x)\!-\!\tilde{g}'(y)|\!=\!|\phi(x)\!-\!\phi(y)|\leq
\beta|x\!-\!y|.
\end{split}
\end{equation}

Problem \eqref{problem_update_W} could be solved by
APG~\citep{Beck2009FISTA} by locally linearizing
$\hat{g}(W)=\tilde{g}(WX)$. However, the Lipschitz constant of the
gradient of $\hat{g}(W)$, which is $\beta\|X\|_2^2$, can be very
large, hence the convergence can be slow. Below we propose an
improved version of APG that is tailored for solving
\eqref{problem_update_W} much more efficiently.

Consider the following problem:
\begin{equation}\label{eq:original_problem}
\min_x F(x)\!\equiv\!\varphi(Ax)+h(x),
\end{equation}
where both $\varphi(y)$ and $h(x)$ are convex. Moreover,
$\varphi(y)$ is $L_\varphi$-smooth: $\|\nabla
\varphi(x)\!-\!\nabla \varphi(y)\| \!\leq \!L_\varphi\|x\!-\!y\|,
\forall x,y.$ We assume that the following problem
\begin{equation}\label{eq:update_x}
x_{k+1}\!=\!\argmin_x \langle \nabla
\varphi(Ay_k),A(x\!-\!y_k)\rangle\!+\!\frac{L_\varphi}{2}\|A(x\!-\!y_k)\|^2\!+\!h(x)
\end{equation}
is easy to solve for any given $y_k$. We propose
Algorithm~\ref{alg:solve_x} to solve \eqref{eq:original_problem}.
Then we have the following theorem:
\begin{theorem}
If we use Algorithm~\ref{alg:solve_x} to solve
problem~\eqref{eq:original_problem}, then the convergence rate is
at least $O(k^{-2})$:
$$
F(x_k)\!-\!F(x^*)\!+\!\frac{L_\varphi}{2}\|z_k\|^2\!\leq\!\frac{4}{k^2}\!\left(\!F(x_1)\!-\!F(x^*)\!+\!\frac{L_\varphi}{2}\|z_1\|^2\!\right),
$$
where
$z_k\!=\!A[\theta_{k-1}x_{k-1}\!-\!x_k\!+\!(1\!-\!\theta_{k-1})x^*]$
and $x^*$ is any optimal solution to
problem~\eqref{eq:original_problem}.
\end{theorem}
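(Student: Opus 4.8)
The plan is to run the standard accelerated proximal gradient (FISTA-style) argument, but carried out entirely in the seminorm $\|A\cdot\|$ rather than the Euclidean norm; this is precisely what lets the method use the constant $L_\varphi$ instead of the much larger $L_\varphi\|A\|_2^2$. The starting point is a majorization of the smooth part. Since $\varphi$ is $L_\varphi$-smooth, for any $x,y$ we have $\varphi(Ax)\le\varphi(Ay)+\langle\nabla\varphi(Ay),A(x-y)\rangle+\frac{L_\varphi}{2}\|A(x-y)\|^2$, so the function minimized in the update \eqref{eq:update_x} is a global upper model $M(x)$ of $F$ at $y_k$ with $F(x_{k+1})\le M(x_{k+1})$. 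I would first record this descent inequality.

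Next I would establish a one-step estimate valid for every $x$. Writing $M(x)=\underline{F}(x)+\frac{L_\varphi}{2}\|A(x-y_k)\|^2$, where $\underline{F}(x)=\varphi(Ay_k)+\langle\nabla\varphi(Ay_k),A(x-y_k)\rangle+h(x)$ is convex, the first-order optimality condition of the minimizer $x_{k+1}$ yields a subgradient $-L_\varphi A^TA(x_{k+1}-y_k)\in\partial\underline{F}(x_{k+1})$. Combining this with convexity of $\underline{F}$ and the algebraic identity for the quadratic $\frac{L_\varphi}{2}\|A(\cdot-y_k)\|^2$ gives the three-point inequality $M(x)\ge M(x_{k+1})+\frac{L_\varphi}{2}\|A(x-x_{k+1})\|^2$. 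Using $F(x_{k+1})\le M(x_{k+1})$ on the left and $F(x)\ge\underline{F}(x)$ (convexity of $\varphi$) on the right, I obtain $F(x_{k+1})-F(x)\le\frac{L_\varphi}{2}\|A(x-y_k)\|^2-\frac{L_\varphi}{2}\|A(x-x_{k+1})\|^2$. Every term here depends only on the $A$-images of the iterates, so nothing requires $A$ to be invertible.

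I would then instantiate this at the convex combination $x=(1-\theta_k)x_k+\theta_k x^*$, bound $F(x)\le(1-\theta_k)F(x_k)+\theta_k F(x^*)$ by convexity, and substitute the extrapolation rule defining $y_k$ together with the coefficient recursion $\frac{1-\theta_k}{\theta_k^2}=\frac{1}{\theta_{k-1}^2}$ of Algorithm~\ref{alg:solve_x}. The momentum is chosen precisely so that the two quadratics $\|A(x-y_k)\|^2$ and $\|A(x-x_{k+1})\|^2$ reduce to consecutive values of $\|z_k\|^2$; after dividing by $\theta_k^2$ the estimate becomes the telescoping statement that the normalized potential $\theta_{k-1}^{-2}(F(x_k)-F(x^*))+\frac{L_\varphi}{2}\|z_k\|^2$ is non-increasing. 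Summing and invoking the standard bound $\theta_{k-1}\le 2/k$ (which follows from $\theta_1=1$ and the recursion) delivers the claimed rate with the explicit constant $4/k^2$.

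The main obstacle is the seminorm bookkeeping, not the descent lemma. Two points need care. First, since $A^TA$ may be singular, the minimizer $x_{k+1}$ need not be unique and Euclidean distances $\|x-x^*\|$ are not controlled, so the entire Lyapunov argument must be phrased through $\|A\cdot\|$; I must check that the three-point inequality uses the subgradient only through its $A$-image, which it does. Second, and most delicately, I must verify that the extrapolation weights and the definition $z_k=A[\theta_{k-1}x_{k-1}-x_k+(1-\theta_{k-1})x^*]$ are matched exactly, so that the cross terms generated by expanding $\|A(x-y_k)\|^2$ cancel and leave a clean telescoping difference of $\|z_k\|^2$. Getting these coefficients to line up is where the real work lies.
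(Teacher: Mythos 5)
Your proposal follows essentially the same route as the paper's proof: the $L_\varphi$-smooth majorization carried out in the seminorm $\|A\cdot\|$, the subgradient optimality condition of the update step, a three-point inequality evaluated at a convex combination of $x_k$ and $x^*$, and the telescoping of the potential $F(x_k)-F(x^*)+\frac{L_\varphi}{2}\|z_k\|^2$ with the coefficient bound $2/k$ proved by induction. The only discrepancy is notational: your $\theta_k$ plays the role of the paper's $1-\theta_k$, under which your recursion $\frac{1-\theta_k}{\theta_k^2}=\frac{1}{\theta_{k-1}^2}$ is exactly the square of the paper's $1-\theta_k=\sqrt{\theta_k}\,(1-\theta_{k-1})$, so the arguments coincide.
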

The proof can also be found in Supplementary Materials.

\begin{algorithm}[tb]
    \begin{algorithmic}
    \REQUIRE $x_0$, $x_1$, $\theta_0\!=\!0$, $k=1$, iteration no. $K_2$.
    \STATE  {\textbf{for} $k=1$ to $K_2$ \textbf{do}}
    \STATE $\quad$ Compute $\theta_k$ via $1\!-\!\theta_k\!=\!\sqrt{\theta_k}(1\!-\!\theta_{k-1})$.
    \STATE $\quad$ Compute $y_k$ via $y_k\!=\!\theta_kx_k\!-\!\sqrt{\theta_k}(\theta_{k-1}x_{k-1}\!-\!x_k)$.
    \STATE $\quad$ Update $x_{k+1}$ via~\eqref{eq:update_x}.
    \STATE  {\textbf{end for}}
    \ENSURE  $x_k$.
    \end{algorithmic}
    \caption{Solving \eqref{eq:original_problem}.}
    \label{alg:solve_x}
\end{algorithm}

By instantiating problem~\eqref{eq:original_problem} with problem
\eqref{problem_update_W}, subproblem~\eqref{eq:update_x} becomes
\begin{equation}
\begin{split}
W^{i,t+1}=&\argmin_{W} \left\langle\phi(Y^{i,t}X^{i}), (W\!-\!Y^{i,t})X^i\right\rangle\\
&\!+\!\frac{\beta}{2}\| (W\!-\!Y^{i,t})X^i\|_F^2\!-\!\langle
X^{i+1}, WX^{i}\rangle.
\end{split}
\end{equation}
It is a least-square problem and the solution is:
\begin{equation}\label{eq:update_Wi}
W^{i,t+1}\!=\!\!Y^{i,t}\!-\!\frac{1}{\beta}(\phi(Y^{i,t}X^i)\!-\!X^{i+1})(X^i)^\dag,
\end{equation}
where $(X^i)^\dag$ is the pseudo-inverse of $X^i$ and $Y^{i,t}$
plays the role of $y_k$ in Algorithm~\ref{alg:solve_x}.

If $\phi(x)$ is strictly increasing and the rate of increment
$\frac{\phi(y)-\phi(x)}{y-x}\, (y\neq x)$ is lower bounded by
$\alpha>0$, then $\tilde{g}(x)$ is strongly convex and the
convergence is linear~\citep{Nesterov2000Introductory}. We omit
the details here.

\section{Experiments}

\begin{figure*}[!t]
    \hspace{-0.8em}
    \begin{tabular}{c@{\extracolsep{0.2em}}c@{\extracolsep{0.2em}}c@{\extracolsep{0.2em}}c}
        \vspace{-0.5em}
        \includegraphics[width=0.22\textwidth]{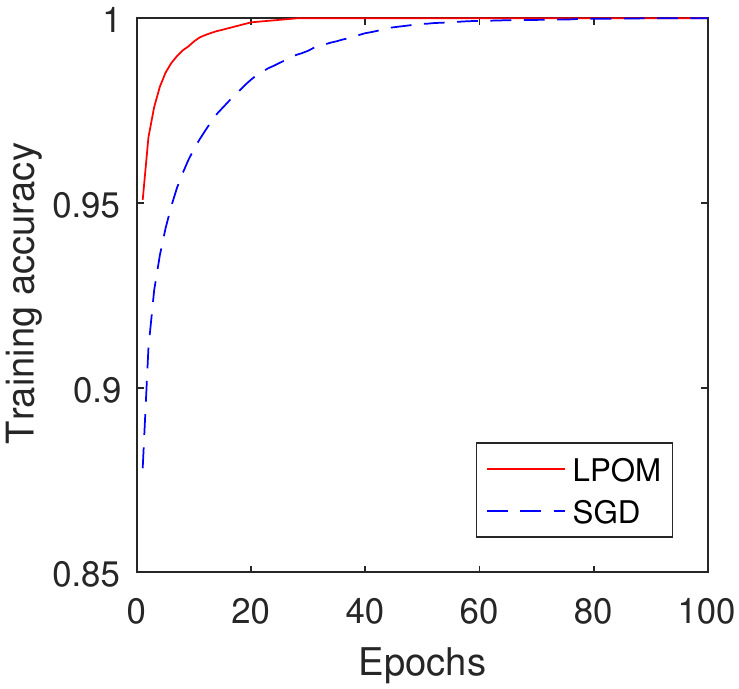}&
        \includegraphics[width=0.22\textwidth]{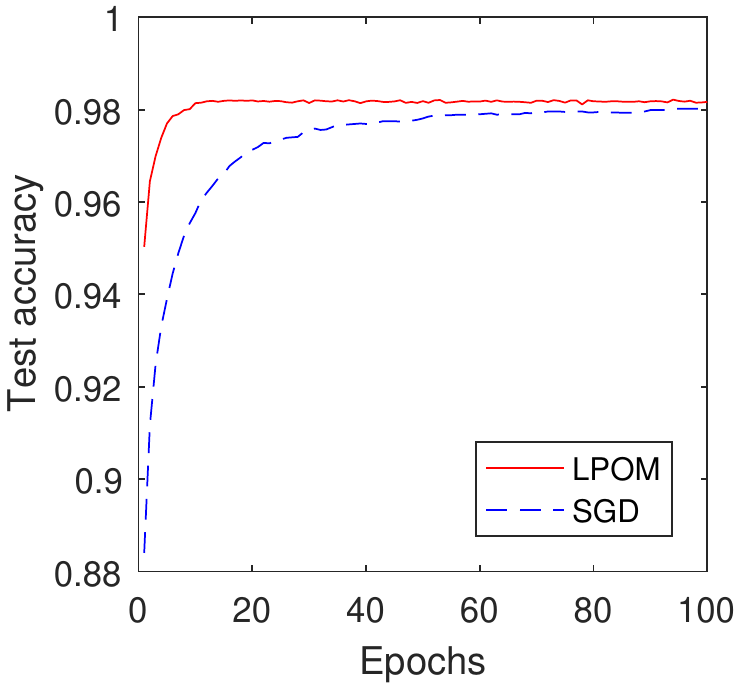}&
        \includegraphics[width=0.21\textwidth]{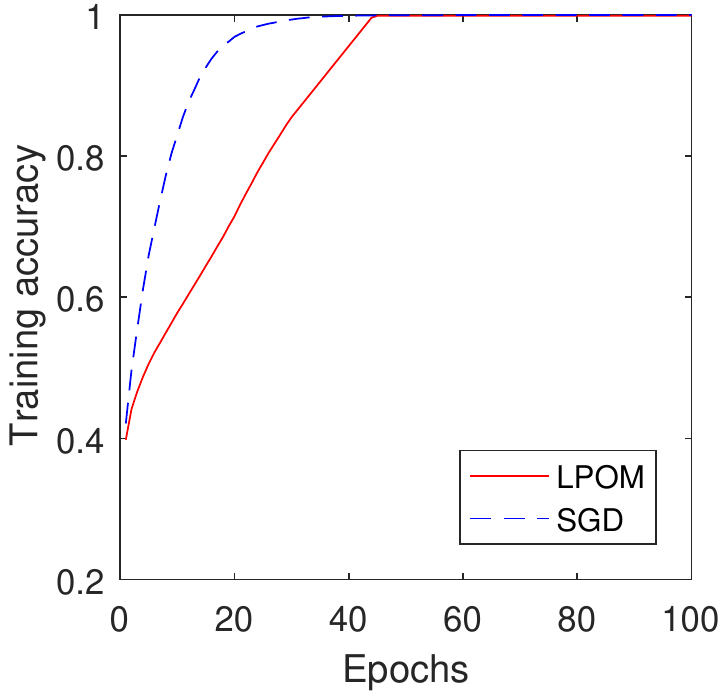}&
        \includegraphics[width=0.22\textwidth]{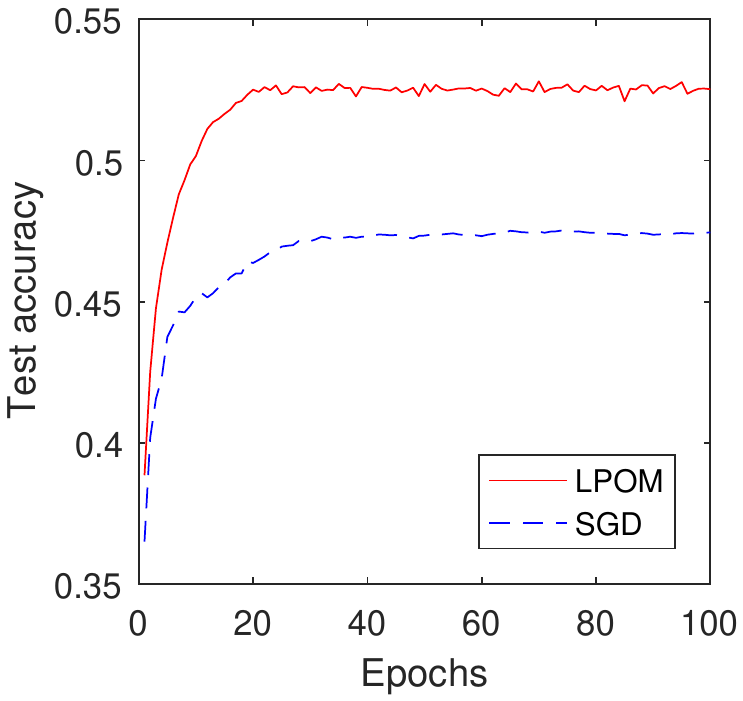}\\
        \vspace{-0.2em}
        {\scriptsize (a) MNIST (Training Acc.) }
        & {\scriptsize (b) MNIST (Test Acc.)}
        & {\scriptsize (c) CIFAR-10 (Training Acc.)}
        & {\scriptsize (d) CIFAR-10 (Test Acc.)}
    \end{tabular}
    \caption{Comparison of LPOM and SGD on the MNIST and the CIFAR-10 datasets. }
    \label{fig:comparison}
\end{figure*}


\begin{table*}
\begin{floatrow}
\scalebox{0.93}[0.93]{
\capbtabbox{
 \begin{tabular}{c||c|c|c|c|c}
        \hline
        Hidden layers & 300 & 300-100 & 500-150 & 500-200-100 & 400-200-100-50\\
        \hline\hline
        \citep{askari2018lifted} & $89.8\%$ & $87.5\%$ & $86.5\%$ & $85.3\%$ & $77.0\%$\\
        \hline
        LPOM  & $97.7\%$ & $96.9\%$ & $97.1\%$ & $96.2\%$ & $96.1\%$ \\
        \hline
    \end{tabular}
}{
 \caption{Comparison of accuracies of LPOM and~\citep{askari2018lifted} on the MNIST dataset using different networks.}
 \label{tab:comparison}
}
}
\scalebox{0.93}[0.93]{
\capbtabbox{
 \begin{tabular}{c||c}
 \hline
 SGD & $95.0\%$ \\
 \hline
 \citep{taylor2016training} & $96.5\%$ \\
 \hline
 LPOM & $98.3\%$ \\
 \hline
 \end{tabular}
}{
 \caption{Comparison with SGD and ~\citep{taylor2016training} on the SVHD dataset.}
 \label{tab:comparison_SVHN}
}
}
\end{floatrow}
\end{table*}

In this section we evaluate LPOM by comparing with \mbox{SGD} and two non-gradient based methods~\citep{askari2018lifted,taylor2016training}. 
The other non-gradient based methods do not train fully connected feed-forward  neural networks for classification tasks (e.g., using skip connections~\citep{zhang2017convergent}, training autoencoders~\citep{carreira2014distributed}, and learning for hashing~\citep{zhang2016efficient}). So we cannot include them for comparison. For simplicity, we utilize the least-square loss function and the ReLU activation
function\footnote{The other reason of using ReLU is that it can
produce higher accuracies, although LPOM can compute with other
activation functions without numerical difficulty.} unless
specified otherwise. Unlike \citep{askari2018lifted}, we do not
use any regularization on the weights $\{W^i\}_{i=1}^{n-1}$. We
run LPOM and SGD with the same inputs and random
initializations~\citep{glorot2010understanding}. We implement LPOM
with MATLAB without optimizing the code. We use the SGD based
solver in Caffe~\citep{jia2014caffe}. For the Caffe solver, we
modify the demo code and carefully tune the parameters to achieve
the best performances. For~\citep{askari2018lifted} and~\citep{taylor2016training}, we quote their
results from the papers.



\subsection{Comparison with SGD}
We conduct experiments on two datasets, i.e.,
MNIST~\footnote{\url{http://yann.lecun.com/exdb/mnist/}} and
CIFAR-10~\footnote{\url{https://www.cs.toronto.edu/~kriz/cifar.html}}.
For the MNIST dataset, we use $28\!\times\!28=784$ raw pixels as
the inputs. It includes 60,000 training images and 10,000 test
images. We do not use pre-processing or data augmentation. For
LPOM and SGD, in each epoch the entire training samples are passed
through once. The performance depends the choice of network
architecture. Following~\citep{zeng2018global}, we implement a
784-2048-2048-2048-10 feed-forward neural network. For LPOM, we
simply set $\mu_i\!=\!20$ in~\eqref{problem_LPOM}. We run LPOM and
SGD for 100 epochs with a fixed batch size 100. The training and
test accuracies are shown in Fig.~\ref{fig:comparison} (a) and
(b). We can see that the training accuracies of the two methods
are both approximately equal to $100\%$. However, the test
accuracy of LPOM is slightly better than that of SGD ($98.2\%$ vs.
$98.0\%$).

For the CIFAR-10 dataset, as in~\citep{zeng2018global} we
implement a 3072-4000-1000-4000-10 feed-forward neural network. We
normalize color images by subtracting the training dataset's means
of the red, green, and blue channels, respectively. We do not use
pre-processing or data augmentation. For LPOM, we set
$\mu_i\!=\!100$ in~\eqref{problem_LPOM}. We run LPOM and SGD for
100 epochs with a fixed batch size 100. The training and test
accuracies are shown in Fig.~\ref{fig:comparison} (c) and (d). We
can see that the training accuracies of SGD and LPOM are
approximately equal to $100\%$. However, the test accuracy of LPOM
is better than that of SGD ($52.5\%$ vs. $47.5\%$).

\subsection{Comparison with Other Non-gradient Based Methods}
\label{sec:compare_askari}

We compare against~\citep{askari2018lifted} with identical
architectures on the MNIST dataset. \citet{askari2018lifted} only
use the ReLU activation function in real computation. As
in~\citep{askari2018lifted}, we run LPOM for 17 epochs with a
fixed batch size 100. For LPOM, we set $\mu_i\!=\!20$ for all the
networks. We 
do not use pre-processing or data augmentation. The
test accuracies of the two methods are shown in
Table~\ref{tab:comparison}. We can see that LPOM with the ReLU
activation function performs better than~\citep{askari2018lifted}
with significant gaps. This complies with our analysis in
Subsection ``Advantages of LPOM''.

Following the settings of dataset and network architecture in~\citep{taylor2016training},
we test LPOM on the Street View House Numbers (SVHN) dataset~\citep{netzer2011reading}. For LPOM, we set $\mu_i\!=\!20$
in~\eqref{problem_LPOM}. The test accuracies of SGD, \citep{taylor2016training}, and LPOM are shown in Table~\ref{tab:comparison_SVHN}.
We can see that LPOM outperforms SGD and \citep{taylor2016training}.
This further verifies the advantage of LPOM.


\section{Conclusions}
In this work we have proposed LPOM to train fully connected
feed-forward neural networks. Using the proximal operator, LPOM
transforms the neural network into a new block multi-convex model.
The transformation works for general non-decreasing
Lipschitz continuous activation functions. We propose a novel
block coordinate descent algorithm with convergence guarantee for
each subproblem. LPOM can be solved in parallel and without more
auxiliary variables than the layer-wise activations. Our
experimental results show that LPOM works better than SGD,
\citep{askari2018lifted}, and~\citep{taylor2016training} on fully
connected neural networks. Future work includes extending LPOM to
train convolutional and recurrent neural networks and applying
LPOM to network compression.


\clearpage
\bibliographystyle{aaai}   
\bibliography{LPOM}

\newpage
\section*{Supplementary Material of \\Lifted Proximal Operator Machine}

\subsection{Optimality Conditions of~\citep{zeng2018global}}

The optimality conditions of~\citep{zeng2018global} are (obtained
by differentiating the objective function w.r.t. $X^n$,
$\{X^i\}_{i=2}^{n-1}$, $\{W^i\}_{i=1}^{n-1}$, and
$\{U^i\}_{i=2}^n$, respectively):
\begin{equation}
\frac{\partial {\ell}(X^n,L)}{\partial
X^n}\!+\!\mu(X^n\!-\!\phi(U^n))\!=\!\mathbf{0},
\end{equation}
\begin{equation}
(W^i)^T(W^iX^i\!-\!U^{i+1})\!+\!(X^i\!-\!\phi(U^i))\!=\!\mathbf{0},\,i=2,\cdots,n-1,
\end{equation}
\begin{equation}
(W^iX^i\!-\!U^{i+1})(X^i)^T\!=\!\mathbf{0},\,i=1,\cdots,n-1,
\end{equation}
\begin{equation}
(U^{i}\!-\!W^{i-1}X^{i-1})\!+\!(\phi(U^i)\!-\!X^i)\circ\phi'(U^i)\!=\!\mathbf{0},\,i=2,\cdots,n
\end{equation}
where $\circ$ denotes the element-wise multiplication.

\comment{ The optimality conditions of~\citep{taylor2016training}
are (obtained by differentiating the objective function w.r.t.
$U^n$, $\{U^i\}_{i=2}^{n-1}$, $W^{n-1}$, $\{W^i\}_{i=1}^{n-2}$,
$X^{n-1}$, $\{X^i\}_{i=2}^{n-2}$, and $M$, respectively):
\begin{equation}
\frac{\partial {\ell}(U^n,L)}{\partial
U^n}\!+\!\beta(U^n\!-\!W^{n-1}X^{n-1}\!+\!M)\!=\!\mathbf{0},
\end{equation}
\begin{equation}
(U^i\!-\!W^{i-1}X^{i-1})\!+\!(\phi(U^i)\!-\!X^i)\circ\phi'(U^i)\!=\!\mathbf{0},\,i=2,\cdots,n-1,
\end{equation}
\begin{equation}
(W^{n-1}X^{n-1}\!-\!U^n-M)(X^{n-1})^T\!=\!\mathbf{0},
\end{equation}
\begin{equation}
(W^{i}X^{i}\!-\!U^{i+1})(X^{i})^T\!=\!\mathbf{0},\,i=1,\cdots,n-2,
\end{equation}
\begin{equation}
(W^{n-1})^T(W^{n-1}X^{n-1}\!-\!U^n-M)\!=\!\mathbf{0},
\end{equation}
\begin{equation}
\mu_{i+1}(W^{i})^T(W^{i}X^{i}\!-\!U^{i+1})\!+\!\mu_i(X^i\!-\!\phi(U^i))\!=\!\mathbf{0},\,i=2,\cdots,n-2,
\end{equation}
\begin{equation}
M\!+\!U^{n}\!-\!W^{n-1}X^{n-1}\!=\!\mathbf{0}.
\end{equation}

The optimality conditions of~\citep{zhang2016efficient} are
(obtained by differentiating the objective function w.r.t. $X^n$,
$\{X^i\}_{i=2}^{n-1}$, $\{W^i\}_{i=1}^{n-1}$,
$\{U^i\}_{i=1}^{n-1}$, $\{A^i\}_{i=1}^{n-1}$ and
$\{B^i\}_{i=1}^{n-1}$, respectively):
\begin{equation}
\frac{\partial {\ell}(X^n,L)}{\partial
X^n}\!+\!\mu(X^n\!-\!\phi(W^{n-1}U^{n-1})\!+\!B^{n-1})\!=\!\mathbf{0},
\end{equation}
\begin{equation}
(X^i\!-\!U^i\!-\!A^i)\!+\!(X^i\!-\!\phi(W^{i-1}U^{i-1})\!+\!B^{i-1})\!=\!\mathbf{0},\,i=2,\cdots,n-1,
\end{equation}
\begin{equation}
((\phi(W^iU^i)\!-\!X^{i+1}\!-\!B^i)\circ\phi'(W^iU^i))(U^i)^T\!=\!\mathbf{0},\,i=1,\cdots,n-1,
\end{equation}
\begin{equation}
(U^i\!-\!X^i\!+\!A^i)\!+\!(W^i)^T((\phi(W^iU^i)\!-\!X^{i+1}\!-\!B^i)\circ\phi'(W^iU^i))\!=\!\mathbf{0},\,i=1,\cdots,n-1,
\end{equation}
\begin{equation}
A^i\!+\!U^i\!-\!X^i\!=\!\mathbf{0},\,i=1,\cdots,n-1,
\end{equation}
\begin{equation}
B^i\!+\!X^{i+1}\!-\!\phi(W^iU^i)\!=\!\mathbf{0},\,i=1,\cdots,n-1.
\end{equation}

The KKT conditions of~\citep{zhang2017convergent}
and~\citep{askari2018lifted} are the same when using the ReLU
activation function. For $X^n$, $\{X^i\}_{i=2}^{n-1}$,
$\{W^i\}_{i=1}^{n-1}$, they are:
\begin{equation}
\begin{split}
&\frac{\partial {\ell}(X^n,L)}{\partial X^n}\!+\!\mu_n(X^n\!-\!W^{n-1}X^{n-1})\!-\!\lambda^n \!=\!\mathbf{0},\\
&X^n \!\geq \!\mathbf{0},\\
&~\lambda^n \!\geq \!\mathbf{0},
\end{split}
\end{equation}
\begin{equation}
\begin{split}
&\mu_i(X^i\!-\!W^{i\!-\!1}X^{i\!-\!1})\!+\!\mu_{i\!+\!1}(W^i)^T(W^iX^i\!-\!X^{i\!+\!1})\!-\!\lambda^i\!=\!\mathbf{0},\\
&~~~~X^i \!\geq \!\mathbf{0},\\
&~~~~~\lambda^i \!\geq\! \mathbf{0},
\end{split}
\end{equation}
\begin{equation}
(W^iX^i\!-\!X^{i+1})(X^i)^T\!=\!\mathbf{0},
\end{equation}
where $\{\lambda^i\}_{i=2}^{n}$ are the KKT multipliers. }

\subsection{Proof of Theorem 2}
If $f(x)$ is contractive: $\|f(x)\!-\!f(y)\|\!\leq\! \rho
\|x-y\|$, for all $x$, $y$, where $0\!\leq \!\rho \!< \!1$. Then
the iteration $x_{k+1}\!=\!f(x_{k})$ is convergent and the
convergence rate is linear~\citep{kreyszig1978introductory}. If
$f(x)$ is continuously differentiable, then $\|\nabla f(x)\|
\!\leq \!\rho$ ensures that $f(x)$ is contractive.

Now we need to estimate the Lipschitz coefficient $\rho$ for the
mapping
$X^{i,{t+1}}\!=\!f(X^{i,t})\!=\!\phi\left(W^{i\!-\!1}X^{i\!-\!1}\!-\!\frac{\mu_{i\!+\!1}}{\mu_i}
{(W^{i})}^T(\phi(W^iX^i)\!-\!X^{i\!+\!1})\right)$. Its Jacobian
matrix is:
\begin{equation}
\begin{split}
&J_{kl,pq}=\frac{\partial [f(X^{i,t})]_{kl}}{\partial X^{i,t}_{pq}}\\
&=\!\frac{\partial \phi\left([W^{i\!-\!1}X^{i\!-\!1}]_{kl}\!-\!\frac{\mu_{i\!+\!1}}{\mu_i} [{(W^{i})}^T(\phi(W^iX^{i,t})\!-\!X^{i\!+\!1})]_{kl}\right)  }{\partial X^{i,t}_{pq}}\\
&=\!-\frac{\mu_{i\!+\!1}}{\mu_i}\phi'(c^{i,t}_{kl})\frac{\partial [{(W^{i})}^T(\phi(W^iX^{i,t})\!-\!X^{i\!+\!1})]_{kl}}{\partial X^{i,t}_{pq}}\\
&=\!-\frac{\mu_{i\!+\!1}}{\mu_i}\phi'(c^{i,t}_{kl})\frac{\partial \sum_r W^i_{rk} [\phi\left((W^iX^{i,t})_{rl}\right)\!-\!X^{i\!+\!1}_{rl}]}{\partial X^{i,t}_{pq}}\\
&=\!-\frac{\mu_{i\!+\!1}}{\mu_i}\phi'(c^{i,t}_{kl})\sum_r W^i_{rk} \phi'((W^iX^{i,t})_{rl})\frac{\partial (W^iX^{i,t})_{rl} }{\partial X^{i,t}_{pq}}\\
&=\!-\frac{\mu_{i\!+\!1}}{\mu_i}\phi'(c^{i,t}_{kl})\sum_r W^i_{rk} \phi'((W^iX^{i,t})_{rl})\frac{\partial \sum_s W^i_{rs}X^{i,t}_{sl} }{\partial X^{i,t}_{pq}}\\
&=\!-\frac{\mu_{i\!+\!1}}{\mu_i}\phi'(c^{i,t}_{kl})\sum_r W^i_{rk} \phi'((W^iX^{i,t})_{rl})\sum_s W^i_{rs}\delta_{sp}\delta_{lq}\\
&=\!-\frac{\mu_{i\!+\!1}}{\mu_i}\phi'(c^{i,t}_{kl})\sum_r W^i_{rk}
\phi'((W^iX^{i,t})_{rl}) W^i_{rp}\delta_{lq},
\end{split}
\end{equation}
where
$c^{i,t}_{kl}=[W^{i\!-\!1}X^{i\!-\!1}]_{kl}\!-\!\frac{\mu_{i\!+\!1}}{\mu_i}
[{(W^{i})}^T(\phi(W^iX^{i,t})\!-\!X^{i\!+\!1})]_{kl}$,
$\delta_{sp}$ is the Kronecker delta function, it is 1 if $s$ and
$p$ are equal, and 0 otherwise. Its $l_1$ norm is upper bounded
by:
\begin{equation}
\begin{split}
&\|J\|_1\!=\!\max_{pq}\sum_{kl} |J_{kl,pq}|\\
&=\!\frac{\mu_{i\!+\!1}}{\mu_i}\max_{pq}\sum_{kl}\left|\phi'(c^{i,t}_{kl})\sum_r W^i_{rk} \phi'((W^iX^{i,t})_{rl}) W^i_{rp}\delta_{lq}\right|\\
&\leq \! \frac{\mu_{i\!+\!1}}{\mu_i} \gamma^2 \max_p \sum_k\sum_r|W^i_{rk}||W^i_{rp}|\\
&\leq \!\frac{\mu_{i\!+\!1}}{\mu_i} \gamma^2 \max_p \sum_k \left(|(W^i)^T||W^i|\right)_{kp}\\
&= \!\frac{\mu_{i\!+\!1}}{\mu_i} \gamma^2 \left\| |(W^i)^T| |W^i|
\right\|_1.
\end{split}
\end{equation}
Its $l_\infty$ norm is upper bounded by
\begin{equation}
\begin{split}
&\|J\|_\infty\!=\!\max_{kl}\sum_{pq} |J_{kl,pq}|\\
&=\!\frac{\mu_{i\!+\!1}}{\mu_i}\max_{kl}\sum_{pq}\left|\phi'(c^{i,t}_{kl})\sum_r W^i_{rk} \phi'((W^iX^{i,t})_{rl}) W^i_{rp}\delta_{lq}\right|\\
&\leq\! \frac{\mu_{i\!+\!1}}{\mu_i} \gamma^2 \max_k \sum_p\sum_r|W^i_{rk}||W^i_{rp}|\\
&\leq \! \frac{\mu_{i\!+\!1}}{\mu_i} \gamma^2 \max_k \sum_p \left(|(W^i)^T||W^i|\right)_{kp}\\
&=\! \frac{\mu_{i\!+\!1}}{\mu_i} \gamma^2 \left\| |(W^i)^T| |W^i|
\right\|_\infty.
\end{split}
\end{equation}
Therefore, by using $\|A\|_2\leq
\sqrt{\|A\|_1\|A\|_\infty}$~\citep{golub2012matrix}, the $l_2$
norm of its Jacobian matrix is upper bounded by
\begin{equation}
\|J\|_2\leq\frac{\mu_{i+1}}{\mu_i}\gamma^2\sqrt{\left\|
|(W^i)^T||W^i|\right\|_1\left\| |(W^i)^T| |W^i|\right\|_\infty},
\end{equation}
which is the Lipschitz coefficient $\rho$.

\subsection{Proof of Theorem 3}

The proof of the first part is the same as that of Theorem 2. So
we only detail how to estimate the Lipschitz coefficient $\tau$
for the mapping
$X^{n,t+1}\!=\!f(X^{n,t})\!=\!\phi\left(W^{n-1}X^{n-1}\!-\!\frac{1}{\mu_n}\frac{\partial
{\ell}(X^{n,t},L)}{\partial X^{n,t}}\right)$. Its Jacobian matrix
is:
\begin{equation}
\begin{split}
J_{kl,pq}&=\frac{\partial [f(X^{n,t})]_{kl}}{\partial X^{n,t}_{pq}}\\
&=\frac{\partial  \phi\left( (W^{n-1}X^{n-1})_{kl}\!-\!\frac{1}{\mu_n}\frac{\partial {\ell}(X^{n,t},L)}{\partial X^{n,t}_{kl}}\right) }{\partial X^{n,t}_{pq}}\\
&=-\frac{1}{\mu_n}\phi'(d^{n,t}_{kl})\frac{\partial \frac{\partial {\ell}(X^{n,t},L)}{\partial X^{n,t}_{kl}} }{\partial X^{n,t}_{pq}}\\
&=-\frac{1}{\mu_n}\phi'(d^{n,t}_{kl}) \frac{\partial^2
{\ell}(X^{n,t},L)}{\partial X^{n,t}_{kl} \partial X^{n,t}_{pq}},
\end{split}
\end{equation}
where $d^{n,t}_{kl}=(W^{n-1}X^{n-1})_{kl}\!-\!\frac{1}{\mu_n}
\left(\frac{\partial {\ell}(X^{n,t},L)}{\partial
X^{n,t}}\right)_{kl}$. Its $l_1$ norm is upper bounded by:
\begin{equation}
\begin{split}
\|J\|_1&=\max_{pq}\sum_{kl} |J_{kl,pq}|\\
&=\frac{1}{\mu_n}\max_{pq}\sum_{kl}\left|\phi'(d^{n,t}_{kl})\frac{\partial^2 {\ell}(X^{n,t},L)}{\partial X^{n,t}_{kl} \partial X^{n,t}_{pq}}\right|\\
&\leq \frac{\gamma}{\mu_n}\max_{pq}\sum_{kl} \left|\frac{\partial^2 {\ell}(X^{n,t},L)}{\partial X^{n,t}_{kl} \partial X^{n,t}_{pq}}\right| \\
&=\frac{\gamma}{\mu_n} \left\|\left|\frac{\partial^2 {\ell}(X^{n,t},L)}{\partial X^{n,t}_{kl} \partial X^{n,t}_{pq}}\right| \right\|_1 \\
&\leq \frac{\gamma \eta}{\mu_n}.
\end{split}
\end{equation}
Its $l_\infty$ norm is upper bounded by:
\begin{equation}
\begin{split}
\|J\|_\infty&=\max_{kl}\sum_{pq} |J_{kl,pq}|\\
&=\frac{1}{\mu_n}\max_{kl}\sum_{pq}\left|\phi'(d^{n,t}_{kl})\frac{\partial^2 {\ell}(X^{n,t},L)}{\partial X^{n,t}_{kl} \partial X^{n,t}_{pq}}\right|\\
&\leq \frac{\gamma}{\mu_n}\max_{kl}\sum_{pq} \left|\frac{\partial^2 {\ell}(X^{n,t},L)}{\partial X^{n,t}_{kl} \partial X^{n,t}_{pq}}\right| \\
&=\frac{\gamma}{\mu_n} \left\|\left|\frac{\partial^2 {\ell}(X^{n,t},L)}{\partial X^{n,t}_{kl} \partial X^{n,t}_{pq}}\right| \right\|_1 \\
&\leq \frac{\gamma \eta}{\mu_n}.
\end{split}
\end{equation}
Therefore, the $l_2$ norm of $J$ is upper bounded by
\begin{equation}
\|J\|_2\leq\sqrt{\|J\|_1\|J\|_\infty}\!\leq\!\frac{\gamma\eta}{\mu_n}\!=\!\tau.
\end{equation}

\subsection{Proof of Theorem 4}
The $L_\varphi$-smoothness of $\varphi$: $$\|\nabla
\varphi(x)\!-\!\nabla \varphi(y)\| \!\leq \!L_\varphi\|x\!-\!y\|,
\forall x,y$$ enables the following inequality
\citep{Nesterov2000Introductory}:
\begin{equation}\label{eq:L_smooth}
\varphi(z)\leq \varphi(y)\!+\!\langle \nabla
\varphi(y),z\!-\!y\rangle\!+\!\frac{L_\varphi}{2}\|z\!-\!y\|^2,
\forall x,y.
\end{equation}
By putting $z=Ax$ and $y=Ay_k$, where $y_k$ is yet to be chosen,
we have
\begin{equation}\label{eq:inequality_by_L_smooth}
\varphi(Ax)\leq\varphi(Ay_k)\!+\!\langle \nabla
\varphi(Ay_k),A(x\!-\!y_k)\rangle\!+\!\frac{L_\varphi}{2}\|A(x\!-\!y_k)\|^2.
\end{equation}
As assumed,
\begin{equation}\label{eq:update_x'}
x_{k+1}\!=\!\argmin_x \langle \nabla
\varphi(Ay_k),A(x\!-\!y_k)\rangle\!+\!\frac{L_\varphi}{2}\|A(x\!-\!y_k)\|^2\!+\!h(x)
\end{equation}
is easy to solve. This gives
\begin{equation}\label{eq:partial_h}
-L_\varphi A^TA(x_{k+1}\!-\!y_k)\in\!A^T\nabla
\varphi(Ay_k)\!+\!\partial h(x_{k+1}).
\end{equation}

Then by \eqref{eq:inequality_by_L_smooth} and the convexity of
$h$, we have
\begin{equation}
\begin{split}
&F(x_{k+1})\!=\!\varphi(Ax_{k+1})\!+\!h(x_{k+1})\\
&\leq \varphi(Ay_k)\!+\!\langle \nabla \varphi(Ay_k),A(x_{k+1}\!-\!y_k)\rangle\!+\!\frac{L_\varphi}{2}\|A(x_{k+1}\!-\!y_k)\|^2\\
&\quad\!+\!h(u)
\!-\!\langle\xi,u\!-\!x_{k+1}\rangle\\
&\leq \varphi(Au)\!+\!\langle \nabla \varphi(Ay_k),A(u\!-\!y_k)\rangle\!+\!\langle \nabla \varphi(Ay_k),A(x_{k+1}\!-\!y_k)\rangle\\
&\quad\!+\!\frac{L_\varphi}{2}\|A(x_{k+1}\!-\!y_k)\|^2\!+\!h(u)
\!-\!\langle\xi,u\!-\!x_{k+1}\rangle\\
&=F(u)\!-\!\langle A^T \nabla \varphi(Ay_k)\!+\!\xi,u\!-\!x_{k+1}\rangle\!+\!\frac{L_\varphi}{2}\|A(x_{k+1}\!-\!y_k)\|^2\\
&=F(u)\!+\!L_\varphi\langle A^TA(x_{k+1}\!-\!y_k),u\!-\!x_{k+1}\rangle\!+\!\frac{L_\varphi}{2}\|A(x_{k+1}\!-\!y_k)\|^2\\
&=F(u)\!+\!L_\varphi\langle
A(x_{k+1}\!-\!y_k),A(u\!-\!x_{k+1})\rangle\!+\!\frac{L_\varphi}{2}\|A(x_{k+1}\!-\!y_k)\|^2,
\end{split}
\end{equation}
where $\xi$ is any subgradient in $\partial h(x_{k+1})$,  $u$ is
any point, and the third equality used~\eqref{eq:partial_h}. Thus
\begin{equation}\label{eq:inequality_x}
\begin{split}
F(x_{k+1})\!&\leq\! F(u)\!+\!L_g\langle A(x_{k+1}\!-\!y_k),A(u\!-\!x_{k+1})\rangle\\
&~~\!+\!\frac{L_g}{2}\|A(x_{k+1}\!-\!y_k)\|^2,\quad \forall u.
\end{split}
\end{equation}
Let $u\!=\!x_k$ and $u\!=\!x^*$ in~\eqref{eq:inequality_x},
respectively. Then multiplying the first inequality with
$\theta_k$ and the second with $1\!-\!\theta_k$ and adding them
together, we have
\begin{equation}\label{eq:inequaltiy_F}
\begin{split}
&F(x_{k+1})\!\leq\! \theta_k F(x_k)\!+\!(1-\theta_k) F(x^*)\\
&~~\!+\!L_\varphi\langle A(x_{k+1}\!-\!y_k),A[\theta_k(x_k\!-\!x_{k+1})\!+\!(1\!-\!\theta_k)(x^*\!-\!x_{k+1})]\rangle\\
&~~\!+\!\frac{L_\varphi}{2}\|A(x_{k+1}\!-\!y_k)\|^2\\
&=\!\theta_kF(x_k)\!+\!(1\!-\!\theta_k)F(x^*)\\
&~~\!+\!L_\varphi\langle A(x_{k+1}\!-\!y_k),A[\theta_k x_k\!-\!x_{k+1}\!+\!(1\!-\!\theta_k)x^*]\rangle\\
&~~\!+\!\frac{L_\varphi}{2}\|A(x_{k+1}\!-\!y_k)\|^2\\
&=\!\theta_kF(x_k)\!+\!(1\!-\!\theta_k)F(x^*)\\
&~~\!+\!\frac{L_\varphi}{2}\left\{\|A[(x_{k+1}\!-\!y_k)\!+\!(\theta_k x_k\!-\!x_{k+1}\!+\!(1\!-\!\theta_k)x^*)]\|^2\right.\\
&~~-\left.\|A(x_{k+1}\!-\!y_k)\|^2\!-\!\|A[\theta_kx_k-x_{k+1}\!+\!(1\!-\!\theta_k)x^*]\|^2\right\}\\
&~~\!+\!\frac{L_\varphi}{2}\|A(x_{k+1}\!-\!y_k)\|^2\\
&=\!\theta_kF(x_k)\!+\!(1\!-\!\theta_k)F(x^*)\\
&~~\!+\!\frac{L_\varphi}{2}\left\{\|A[\theta_k x_k\!-\!y_{k}\!+\!(1\!-\!\theta_k)x^*]\|^2\right.\\
&~~\left.-\|A[\theta_kx_k-x_{k+1}\!+\!(1\!-\!\theta_k)x^*]\|^2\right\}.
\end{split}
\end{equation}
In order to have a recursion, we need to have:
$$\theta_kx_k\!-\!y_k\!+\!(1\!-\!\theta_k)x^*\!=\!\sqrt{\theta_k}[\theta_{k-1}x_{k-1}\!-\!x_k+(1\!-\!\theta_{k-1})x^*].$$
By comparing the coefficient of $x^*$, we have
\begin{equation}\label{eq:update_theta}
1\!-\!\theta_k\!=\!\sqrt{\theta_k}(1\!-\!\theta_{k-1}).
\end{equation}
Accordingly,
\begin{equation}\label{eq:update_yk}
y_k\!=\!\theta_kx_k\!-\!\sqrt{\theta_k}(\theta_{k-1}x_{k-1}\!-\!x_k).
\end{equation}
With the above choice of $\{\theta_k\}$ and $y_k$,
\eqref{eq:inequaltiy_F} can be rewritten as
\begin{equation}
\begin{split}
&F(x_{k+1})\!-\!F(x^*)\!+\!\frac{L_\varphi}{2}\|z_{k+1}\|^2\\
\leq&\theta_k\left(F(x_k)\!-\!F(x^*)\!+\!\frac{L_\varphi}{2}\|z_k\|^2\right),
\end{split}
\end{equation}
where
$z_k\!=\!A[\theta_{k-1}x_{k-1}\!-\!x_k+(1\!-\!\theta_{k-1})x^*]$.
Then by recursion, we have
\begin{equation}
\begin{split}
&F(x_{k})\!-\!F(x^*)\!+\!\frac{L_\varphi}{2}\|z_{k}\|^2\\
\leq&\left(\prod_{i=1}^{k-1}\theta_i\right)\left(F(x_{1})\!-\!F(x^*)\!+\!\frac{L_\varphi}{2}\|z_{1}\|^2\right).
\end{split}
\end{equation}

It remains to estimate $\prod_{i=1}^{k-1}\theta_i$. We choose
$\theta_0\!=\!0$ and prove
\begin{equation}\label{eq:theta}
1\!-\!\theta_k\!<\!\frac{2}{k+1}
\end{equation}
by induction. \eqref{eq:theta} is true for $k\!=\!0$. Suppose
\eqref{eq:theta} is true for $k\!-\!1$, then by
$1\!-\!\theta_k\!=\!\sqrt{\theta_k}(1\!-\!\theta_{k-1})$, we have
\begin{equation}
1\!-\!\theta_k\!=\!\sqrt{\theta_k}(1\!-\!\theta_{k-1})\!<\!\sqrt{\theta_k}\frac{2}{k}.
\end{equation}
Let $\tilde{\theta}_k\!=\!1\!-\!\theta_k$, then the above becomes
$k^2\tilde{\theta}_k^2\!<\!4(1-\tilde{\theta}_k)$. So
\begin{equation}
\tilde{\theta}_k<\frac{-4+\sqrt{16+16k^2}}{2k^2}\!=\!\frac{2}{1\!+\!\sqrt{1+k^2}}\!<\!\frac{2}{k+1}.
\end{equation}
Thus~\eqref{eq:theta} is proven.

Now we are ready to estimate $\prod_{i=1}^{k-1}\theta_i$. From
$1-\theta_k=\sqrt{\theta_k}(1-\theta_{k-1})$, we have
$$1-\theta_{k-1}\!=\!\sqrt{\prod_{i=1}^{k-1}\theta_i}(1\!-\!\theta_0)\!=\!\sqrt{\prod_{i=1}^{k-1}\theta_i}.$$ So $\prod_{i=1}^{k-1}\theta_i\!=\!(1\!-\!\theta_{k-1})^2\!<\!\frac{4}{k^2}$. Hence
$$
F(x_k)\!-\!F(x^*)\!+\!\frac{L_\varphi}{2}\|z_k\|^2\!\leq\!\frac{4}{k^2}\!\left(F(x_1)\!-\!F(x^*)\!+\!\frac{L_\varphi}{2}\|z_1\|^2\right).
$$

The three equations, \eqref{eq:update_theta},
\eqref{eq:update_yk}, and \eqref{eq:update_x'} constitute the
major steps in Algorithm 2.

\end{document}